\def\eqref#1{equation~\ref{#1}}
\def\1{\bm{1}}
\DeclareMathAlphabet{\mathsfit}{\encodingdefault}{\sfdefault}{m}{sl}
\SetMathAlphabet{\mathsfit}{bold}{\encodingdefault}{\sfdefault}{bx}{n}
\newcommand{\R}{\mathbb{R}}
\newcommand{\M}{{\mathcal M}}
\newcommand{\pb}{^{-1*}}
\theoremstyle{plain}
\newtheorem{theorem}{Theorem}[section]
\theoremstyle{definition}
\newtheorem{definition}[theorem]{Definition}
\theoremstyle{remark}
\icmltitlerunning{AtlasD: Automatic Local Symmetry Discovery}
\begin{document}

\twocolumn[
\icmltitle{AtlasD: Automatic Local Symmetry Discovery}



\icmlsetsymbol{equal}{*}

\begin{icmlauthorlist}
\icmlauthor{Manu Bhat}{ucsd}
\icmlauthor{Jonghyun Park}{ucsd}
\icmlauthor{Jianke Yang}{ucsd}
\icmlauthor{Nima Dehmamy}{ibm}
\icmlauthor{Robin Walters}{neu}
\icmlauthor{Rose Yu}{ucsd}
\end{icmlauthorlist}

\icmlaffiliation{ucsd}{University of California San Diego}
\icmlaffiliation{ibm}{IBM Research}
\icmlaffiliation{neu}{Northeastern University}


\icmlcorrespondingauthor{Rose Yu}{roseyu@ucsd.edu}

\icmlkeywords{local symmetry discovery, symmetry discovery, equivariance, gauge equivariant neural network, Lie theory}

\vskip 0.3in
]



\printAffiliationsAndNotice{}  

\begin{abstract}
Existing symmetry discovery methods predominantly focus on \textit{global} transformations across the entire system or space, but they fail to consider the symmetries in \textit{local} neighborhoods. This may result in the reported symmetry group being a misrepresentation of the true symmetry. In this paper, we formalize the notion of local symmetry as atlas equivariance. Our proposed pipeline, \textbf{a}u\textbf{t}omatic \textbf{l}oc\textbf{a}l \textbf{s}ymmetry \textbf{d}iscovery (\texttt{AtlasD}), recovers the local symmetries of a function by training local predictor networks and then learning a Lie group basis to which the predictors are equivariant. We demonstrate \texttt{AtlasD} is capable of discovering local symmetry groups with multiple connected components in top-quark tagging and partial differential equation experiments. The discovered local symmetry is shown to be a useful inductive bias that improves the performance of downstream tasks in climate segmentation and vision tasks. Our code is publicly available at \href{https://github.com/Rose-STL-Lab/AtlasD}{https://github.com/Rose-STL-Lab/AtlasD}.
\end{abstract}

\section{Introduction}

Equivariant neural networks \citep{bronstein2021geometric}, a family of models that exploit symmetry as an inductive bias for neural network architectures, have received increasing attention in deep learning due to their training efficiency and improved generalization \citep{krizhevsky_imagenet, NEURIPS2019_f04cd739, zaheer_deepsets}. The key idea behind these models is that many real-world situations exhibit inherent symmetries—transformations such as rotation, translation, and scaling, which leave the essential properties of a system unchanged. This has enabled a wide range of applications, leading to empirical success \citep{winkels_GCNN, lunter_equivariantbayesianCNN, cohen_groupECNN, cohen_spherical}. Despite its achievements, equivariant networks require knowledge of the system's symmetries beforehand. To adhere to this successful design principle even when the symmetry group is unknown a priori, many works have developed auxiliary neural networks to automatically identify symmetries \citep{NEURIPS2020_cc8090c4, zhou_metalearning_symmetry, dehmamy_automaticsymdiscovery, moskalev2022liegg, yang2023generative, pmlr-v221-gabel23a}.

\begin{figure}
    \vskip 0.1in
    \centering
    \includegraphics[width=0.35\textwidth]{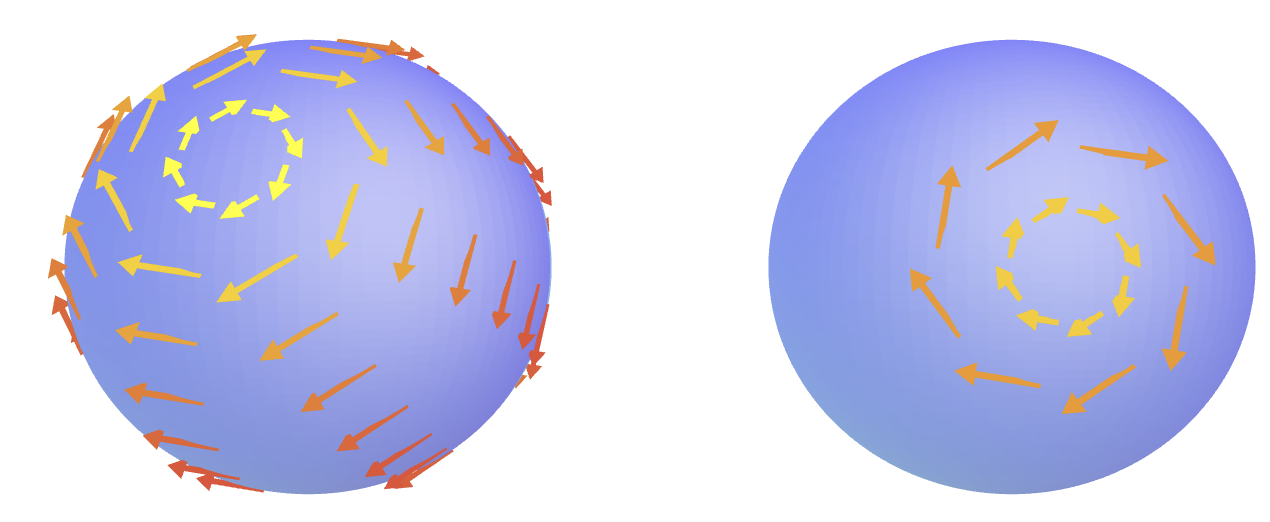}
    \caption{Global vs local transformations. Global transformations (left) alter the space in a uniform manner, whereas local transformations (right) only affect a particular neighborhood.}
    \label{fig:glob_v_loc}
    \vskip -0.1in
\end{figure}

The aforementioned equivariant models and symmetry discovery pipelines focus on \textit{global} symmetries, where a transformation applies across the entire space. However, arbitrary manifolds generally do not have global symmetries to begin with \citep{gerken_geometric_deeplearning}, preventing the use of globally equivariant networks. This begs the need to consider \textit{local} symmetries—transformations on small neighborhoods—which are much more generalized (Figure \ref{fig:glob_v_loc}). Indeed, some recent work considers local symmetry, such as \citet{cohen2019gauge}, which develops gauge equivariant CNNs to take advantage of the gauge symmetries of arbitrary manifolds. Construction of such networks once again requires knowledge of the symmetry beforehand. However, existing discovery methods are of limited help as they ignore such local symmetries. Hence, the need to develop a local symmetry discovery pipeline is clear: it would generalize symmetry discovery to arbitrary manifolds and allow for downstream use in gauge equivariant networks.

In this work, we define local symmetry around the notion of an atlas. An atlas is a collection of local regions, or charts, that cover a manifold. In short, the principle of \emph{atlas equivariance} states that when restricting a function to a particular chart, the localized function must be equivariant. We develop a method based on deep learning that can discover the atlas equivariances of dataset in the form of a Lie group. To do so, we first model the task function localized to the various charts using neural networks. Then, we create a Lie group basis and optimize it until the localized networks are equivariant with respect to the group. After discovery, we use the resulting symmetry as an inductive bias to create equivariant networks. Specifically, we experiment with top-quark tagging, synthetic partial differential equations, MNIST classification, and climate segmentation to test the validity of our discovery method and measure performance gains in downstream models.

Our contributions can be summarized as follows:
\begin{itemize}
    \item We formalize the notion of local symmetry through the definition of \emph{atlas equivariance} and establish theoretical connections.
    
    \item We develop a pipeline, \textbf{a}u\textbf{t}omatic \textbf{l}oc\textbf{a}l \textbf{s}ymmetry \textbf{d}iscovery (\texttt{AtlasD}), to recover local symmetries from a dataset. \texttt{AtlasD} can learn both continuous and discrete symmetries.

    \item \texttt{AtlasD} can discover atlas equivariances in cases where existing methods are not applicable, thereby proving the need to consider local symmetries.
   
    \item We show that incorporating the symmetries discovered by \texttt{AtlasD} in a gauge equivariant CNN \citep{cohen2019gauge} can lead to better performance and parameter efficiency.
\end{itemize}
\section{Related Work}

\textbf{Equivariant Neural Networks}. Equivariant neural networks use known symmetry as an inductive bias when fitting a model. Group equivariant CNN extends the translational equivariance of a CNN to rotations and reflection using group theory \citep{cohen_groupECNN}. Other works focus on designing networks that are equivariant to a wider range of transformations; particularly, $\mathrm E(2)$ transformations on the Euclidean plane \citep{e2cnn}, rotations on the sphere \citep{cohen_spherical}, Lorentz group transformations \citep{gong2022efficient}, and $\mathrm E(n)$ transformations in higher-dimensional spaces \citep{satorras_ECNN}. These works focus on global transformations and require prior knowledge of the symmetry. In contrast, our pipeline focuses on discovering local symmetry.

Gauge equivariant neural networks extend the ideas of globally equivariant neural networks by enforcing local symmetries instead of global ones. Gauge equivariance has been applied in various contexts, including surface meshes \citep{de2020gauge}, lattice structures \citep{latticeCNN}, and general manifolds \citep{cohen2019gauge}. Once again, these networks require user intervention to determine the gauge group. On the other hand, we show that our definition of local symmetry is connected to gauge equivariance and that using our discovered symmetry in a gauge equivariant CNN can lead to better performance and parameter efficiency.

To deal with situations where symmetries are only partially present in systems, some works introduce the notion of approximately equivariant neural networks. Implementing such networks can be done in various manners, including modification of convolutional layers \citep{wang2022approximately, relaxingnonstationary} and addition of non-equivariant residual terms \citep{respathwaypriors}. Other works seek to match model and data equivariance error more closely by learning the degree of equivariance from the data  \citep{cesalearnedequivariance, learningpartialequivariances, probabilisticlearning, layerwiseequivariances}. \citet{approximationtradeoff} gives results about the generalizability of approximately equivariant networks, whereas \citet{extrinsicequivariance} provides a theoretical understanding of equivariant networks in systems with varying degrees of symmetry. While approximate equivariance is related to local symmetry in that they both deal with situations where the global symmetry may not be fully present, local symmetry is a more generalized notion as it can be applied to arbitrary manifolds.

\textbf{Automatic Symmetry Discovery}. Many works perform automatic symmetry discovery to identify the unknown symmetry within a dataset. Attempts have been made to discover general continuous symmetries with Lie theory, such as LieGG to discover symmetry from the polarization matrix \citep{moskalev2022liegg},  L-conv \citep{dehmamy_automaticsymdiscovery} to find group equivariant functions, and \citet{Forestano_2023} who discover closed Lie subalgebras from a dataset. LieGAN \citep{yang2023generative} uses a generator-discriminator pattern to discover global symmetries in the form of both continuous Lie groups and discrete subgroups. \citet{pmlr-v221-gabel23a} aim to find the symmetry group as well as quantify the exact distribution of transformations present in a dataset. 

Symmetry discovery works also consider slightly varied problem settings. Some authors seek to find a subset of possible symmetries \citep{NEURIPS2020_cc8090c4,romero_partialequivariances}. Others consider the case where the group acts on the latent space instead of the feature space \citep{yang24latent, pmlr-v221-gabel23a, pmlr-v202-keurti23a, koyama2023neural}. Rather than focusing solely on the symmetry group, \citet{noether, hou2024machinelearningsymmetrydiscovery} learn the conserved quantities of systems. Still, these papers mainly focus on global transformations. More relevant to local symmetry discovery is the work by \citet{decelle_localsym}, which attempts to see if two datapoints are related by a particular local transformation. This is distinct from \texttt{AtlasD}, where we characterize the local symmetry group in an interpretable manner.

\textbf{Comparison with State-of-the-Art.} 
LieGAN, LieGG, and \texttt{AtlasD} are all capable of discovering continuous equivariances, but LieGG requires significant memory and LieGAN uses adversarial methods, which can be complex to train. Moreover, while LieGG cannot discover any discrete symmetries and LieGAN can only discover those with positive determinants, \texttt{AtlasD} proves capable of discovering orientation-reversing discrete symmetries. The largest difference, however, is the implicit hypothesis space of each model. LieGG and LieGAN only consider global symmetries, but \texttt{AtlasD} can also discover local ones. The hypothesis space is a crucial component of a discovery method as it determines the set of symmetries that can be recovered.

\section{Background}

\begin{figure*}[t]
    \vskip 0.1in
    \centering
    \begin{subfigure}[c]{.60\linewidth}
        \centering
        \includegraphics[width=\textwidth]{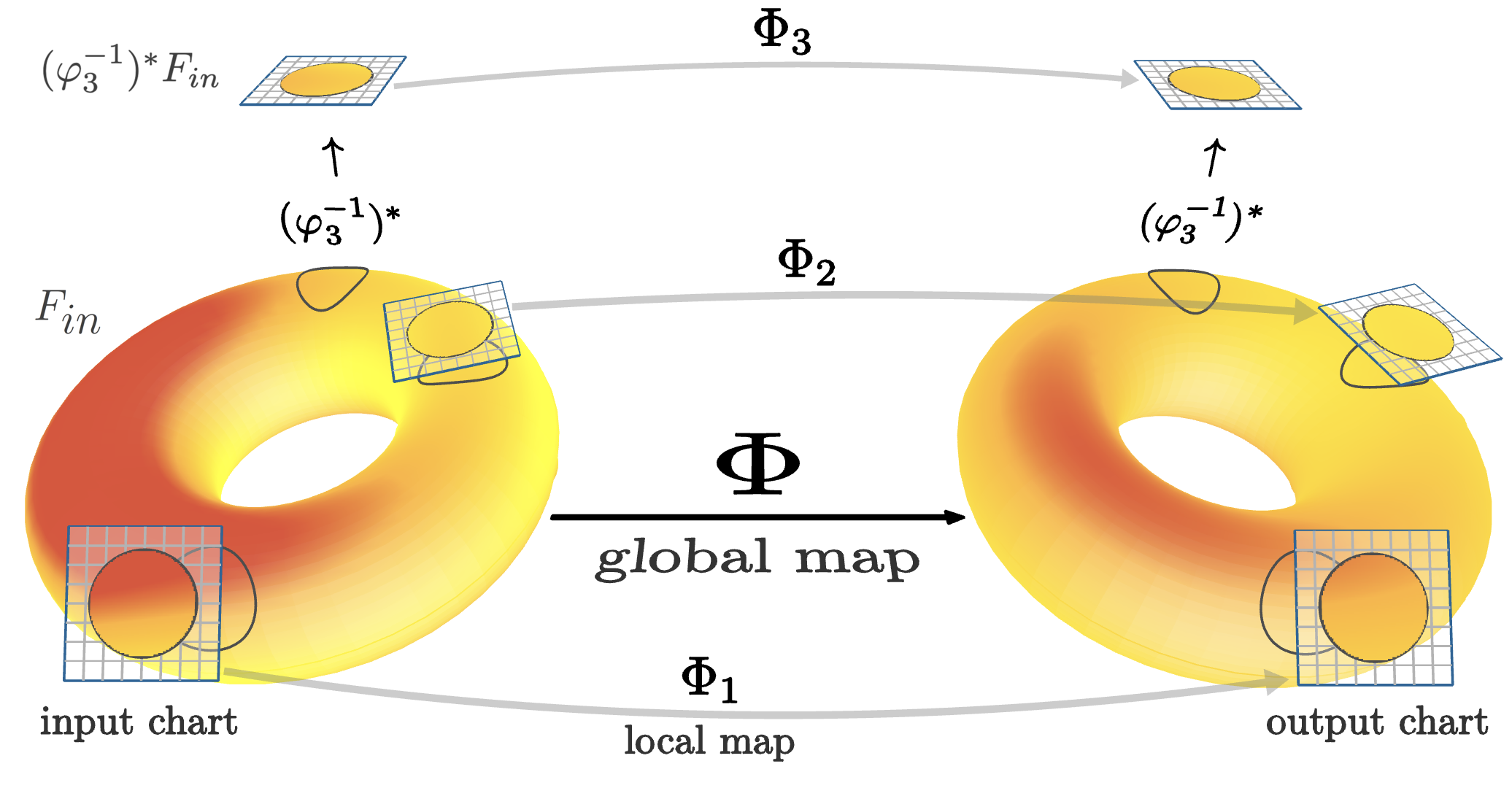}
        \caption{}
    \end{subfigure}%
    \hspace{3em}
    \begin{subfigure}[c]{.25\linewidth}
        \centering        \includegraphics[width=\textwidth]{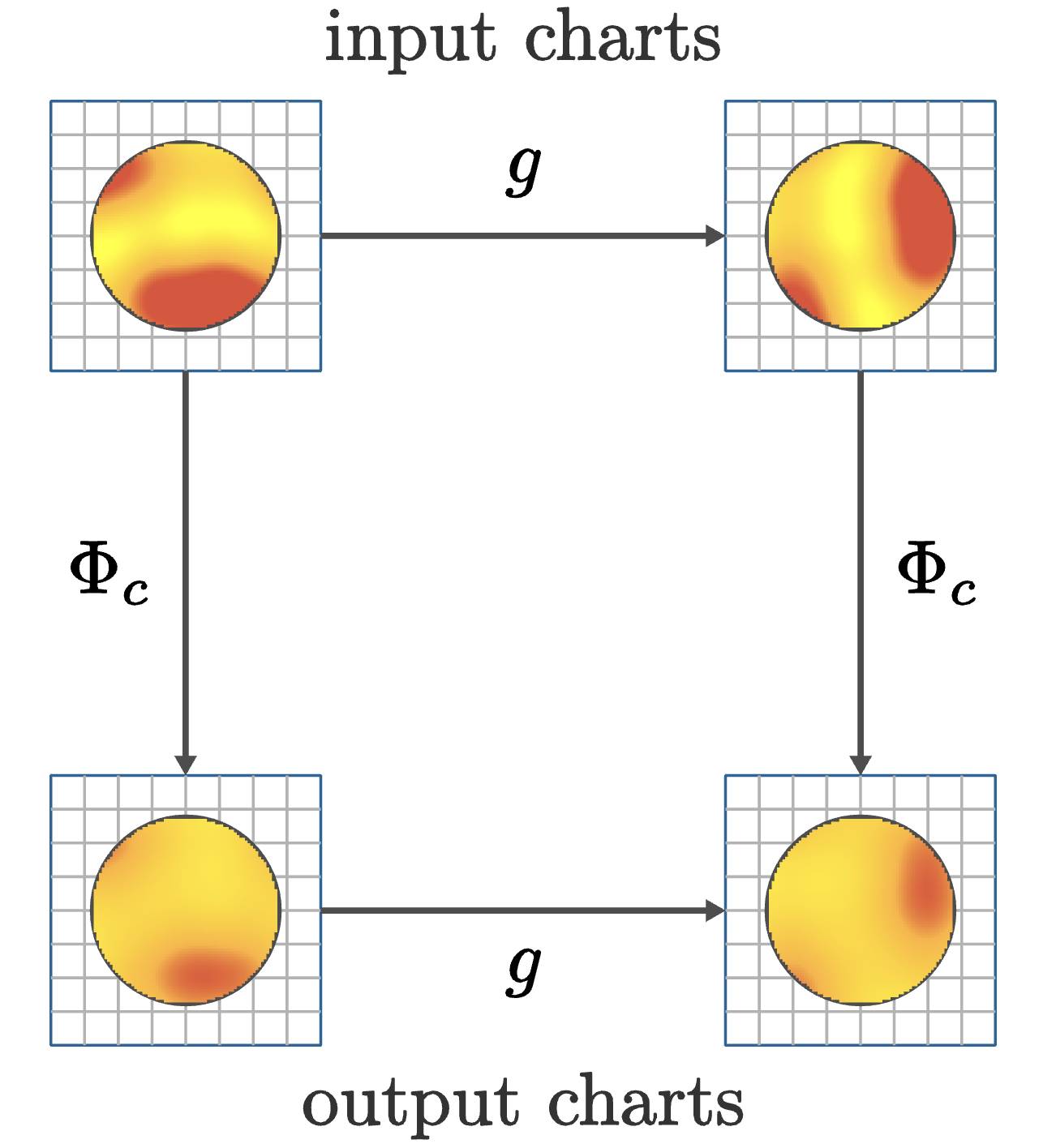}
        \caption{}
        \label{ls-support-equivariance}
    \end{subfigure}    
    \caption{
    Atlas equivariance explained through the example of the heat equation. (a) highlights how the task function $\Phi$ is a function whose input and output are scalar feature fields on a torus. $\Phi$ is then broken up into localized functions, i.e. the $\Phi_c$. Although we only highlight three $\Phi_c$ for visual purposes, in reality there is one for each chart. (b) is a commutative diagram that highlights the rotational equivariance of a localized function and hence the rotational atlas equivariance of $\Phi$.
    }
    \label{fig:ls-f}
    \vskip -0.1in
\end{figure*}

We provide background information on Lie groups, equivariance, feature fields, and atlases. We assume some knowledge of group theory and otherwise refer readers to \citet{artin2011algebra, weiler_tutorial} as useful starting points.

\textbf{Lie Groups}. A Lie group is a group that is also a differentiable manifold. Some examples include $\mathrm{SO}(2)$, $\mathrm{O}(3)$, and $\mathrm{SL}(3)$. The Lie algebra of a Lie group, denoted $\mathfrak{g}$, is the tangent space at the identity element. Being a vector space, the Lie algebra is often simpler to work with than the group. 

For matrix Lie groups, the matrix exponential $\exp(A)$ provides a way to map elements of the Lie algebra to elements of the group's identity component $G_0$, i.e. the connected component containing the identity element. The various connected components are cosets of $G_0$ and will also be plainly referred to as cosets in our work. In many cases, we can factor an arbitrary element of the group as a product, $g = C_i \cdot \exp(A)$, for some coset representative $C_i \in G$ and Lie algebra element $A \in \mathfrak g$. Thus, to understand a Lie group, it is often enough to enumerate all the cosets in the component group $G/G_0$, and identify a basis for its Lie algebra. For further information, see \citet{kirillov2008introduction}.

\textbf{Equivariance}. A function $f$ is said to be $G$-equivariant for some group $G$ if the following holds:
\begin{equation}
    (\forall g \in G) \quad f(g \cdot x) = g \cdot f(x) 
\end{equation}
Here, $g \cdot x$ and $g \cdot f(x)$ denote (possibly different) group actions.

\textbf{Feature Fields}. A feature field identifies a feature vector for each point in a manifold $\M$. Specifically, a feature field is given as a map $F: \M \to \R^{d}$, where $d$ is the dimension of the feature field.

\textbf{Charts and Atlases}. It is not possible to give a consistent choice of coordinates across manifolds with non-trivial topology. We define local coordinates in terms of local charts. A chart is a pair $(U, \varphi)$ where $U$ is an open subset of $\M$ and $\varphi$ is a homeomorphism from $U$ to an open subset of Euclidean space.  An atlas is a set of charts that collectively cover a manifold $\M$. 

\section{\texttt{AtlasD}: Automatic Local Symmetry Discovery}

Existing discovery methods are fundamentally limited in that they ignore local symmetries. To address this problem, we first formulate atlas equivariance as a definition of local symmetry. Then, we detail our methodology for discovering local symmetry in the form of a Lie algebra basis and component group. Finally, we highlight theoretical connections to existing work as well as implementation notes.

\subsection{Atlas Equivariance}

To provide an intuition of local symmetry, we highlight the heat equation on a torus in Figure \ref{fig:ls-f}. Consider the time-stepping function that evolves the current state of the system for some fixed time interval. If we focus only on a neighborhood of the input feature field and its corresponding neighborhood in the output field, a local rotation in the input results in an identical local rotation in the output.

To define local symmetry formally, assume we have a function map $\Phi$ that transforms an input feature field $F_{\text{in}} : \M \to \R ^{\text{d}_{\text{in}}}$ to an output field $F_{\text{out}} : \M \to \R^{\text{d}_{\text{out}}}$. Then, suppose $\mathcal{A}$ is an atlas on $\M$ given by a finite collection of charts $\{ (U_c, \varphi_c) \}_{c=1}^{N}$. For any feature field $F$ on $\M$, we can relocate the subset of $F$ contained in the neighborhood $U_c$ to a flat Euclidean space by pulling back over $\varphi_{c}^{-1}$:
\begin{equation}
    \left((\varphi_{c}^{-1})^*F\right)(x) = \begin{cases}
        F(\varphi_c^{-1}(x))  & \text{if } x \in \varphi_c(U_c) \\
        0                     & \text{else }
    \end{cases}
\end{equation}

Note that the flattened feature field is trivially extended outside $\varphi_c (U_c)$. This is necessary for introducing atlas equivariance, where the group action may take a point $p \in \varphi_c(U_c)$ outside this original domain. This $0$-padding can also be replaced with another value appropriate to the context.

We define local (atlas) equivariance for functions $\Phi$ where the output signal depends locally on the input signal. We formalize this as $\mathcal{A}$ locality. The intuitive notion of a local function is that, under an appropriate choice of atlas, we can fully reconstruct the output field in any individual chart $U_c$ solely from the input field along the same chart. Thus, we say $\Phi$ is $\mathcal{A}$ atlas local if we are able to decompose it into various $\Phi_c$, where $\Phi_c$ is a map between the pullback of the $c$th chart of the input and output feature fields. We sometimes refer to $\{ \Phi_c \}$ as the localized functions of $\Phi$. Formally,

\begin{definition}[Atlas Locality]\label{def:def-a-local}
    $\Phi$ is $\mathcal{A}$ \textbf{atlas local} if for each chart $c$ in $\mathcal{A}$ and arbitrary $F: \M \to \R^{\text{d}_{\text{in}}}$, there exists a $\Phi_c$ such that $\Phi_c\left((\varphi_{c}^{-1})^* F\right) = (\varphi_{c}^{-1})^*\Phi(F)$ when restricted to $\varphi_c(U_c)$.
\end{definition}

Here, the restriction of the feature field to the subset $\varphi_c(U_c)$ indicates that we are not particular about the output of $\Phi_c$ outside the projected chart. We also note that atlas locality is related to the idea of sheaf morphisms. In \cref{sec:proof}, we show that the former is a weaker condition than the latter.

For these atlas local functions, it is possible to consider the symmetry transformations that operate within local neighborhoods. We formalize this notion of local symmetry as follows.

\begin{definition}[Atlas Equivariance]
    $\Phi$ is $\mathcal{A}$ \textbf{atlas equivariant} to some group $G$ if $\Phi$ is $\mathcal{A}$ atlas local with localized functions $\{\Phi_c\}$ and all $\Phi_c$ are globally $G$ equivariant. Specifically, for the group action $(g \cdot E)(p) = E(g^{-1}p)$ where $E$ is a feature field on the Euclidean space, we must have $\Phi_c(g \cdot (\varphi_c\pb F)) = g \cdot \Phi_c(\varphi_c\pb F))$ for arbitrary $g \in G$ and feature fields $F: \M \to \R^{\text{d}_{\text{in}}}$. 
\end{definition}

A technical note is that the $\Phi_c$ may not be unique in that for a given chart $c$, there are many potential localized maps that satisfy the condition specified in Definition \ref{def:def-a-local}. Therefore, $\Phi$ is said to be atlas equivariant if, for each chart $c$, any potential $\Phi_c$ is $G$ globally equivariant. 

\subsection{Atlas Equivariance Discovery}

In our problem setup, given an unknown task function $\Phi: \mathcal X \to \mathcal Y$ between feature fields on the same manifold, we have a dataset $\{(X_i, Y_i) \}_ {i=1} ^ n \subset \mathcal{X}\times\mathcal{Y}$. We assume that a suitable atlas $\mathcal{A}$ for the problem is known. We further assume that the dataset is large enough such that ground truth symmetry is represented in any chart (i.e., all members of any orbit are present). We aim to find the maximal matrix Lie group to which $\Phi$ is $\mathcal{A}$ atlas equivariant.

We propose \texttt{AtlasD}, \textbf{a}u\textbf{t}omatic \textbf{l}oc\textbf{a}l \textbf{s}ymmetry \textbf{d}iscovery, to tackle the problem. First, we must approximate the individual localized functions $\Phi_c$ with neural networks or other differentiable oracles. The exact method is unique to each task, but is generally a simple regression problem. Further details are available in \cref{sec:experiment_details}. In contrast to global discovery techniques, we emphasize that the individual neural networks are localized maps rather than functions over the entire manifold. 

Next, we find the symmetry group of the localized functions. Note that there are important differences between our procedure and global symmetry discovery. In our setting, a group element must only act on a local chart, rather than the full space. Likewise, a core tenet of local symmetry is that we can apply different transformations to different regions of the feature field. Care must be taken, therefore, that the actions on different charts are truly independent.

We aim to discover the maximal group of local symmetry. In practice, these symmetries often involve both discrete and continuous transformations, which motivates us to use Lie groups to describe the symmetries of interest. Specifically, we seek to discover both the \textbf{Lie algebra} which characterizes the continuous transformations, and the \textbf{cosets} that describe the discrete actions of the target group. Put together, these allow us to describe a wide variety of Lie groups. \Cref{alg:atlas} outlines our overall procedure, with the details of the subroutines introduced in the following subsections. We analyze the time and space complexity of \texttt{AtlasD} in \cref{sec:alg_analysis}.

\begin{algorithm}
    \caption{Automatic Local Symmetry Discovery}
    \begin{algorithmic}
        \INPUT{Atlas $\mathcal{A} = \{ (U_c, \varphi_c) \}_{c=1}^N$, \quad \quad \quad  \\ dataset $\mathcal D = \left\{ \left(X_i: \M \to \R^{d_\text{in}}, Y_i: \M \to \R^{d_\text{out}}\right) \right\}_{i=1}^n$ }
        \OUTPUT{Lie algebra $\{ B_i \}$, cosets $\{ C_{\ell} \}$}
        
        \STATE \textbf{1. Train a predictor network $\Phi_c$ for each chart $(U_c, \varphi_c)$ under loss $\mathcal L \left( \Phi_c ( (\varphi_c^{-1})^*X_i ), (\varphi_c^{-1})^*Y_i \right)$} 
        \STATE 
        
        \STATE \textbf{2. Given the trained predictors $\{ \Phi_c \}$, discover the Lie algebra basis $\{ B_i \}$} 
        \STATE \hspace{1em} Set $g = \exp(\sum_{i=1}^{k}\eta_i B_i)$ where $\eta \sim N_k(0, I)$
        \STATE \hspace{1em} Optimize $B$ under loss \\ \hspace{1.5em} $\mathcal{L}(g \cdot \Phi_{c}((\varphi_c^{-1})^* X_i), \Phi_{c}(g \cdot (\varphi_c^{-1})^* X_i))$
        \STATE
        
        \STATE \textbf{3. Given $\{ \Phi_c \}$ and $\{ B_i \}$, find the coset representatives $\{ C_{\ell} \}$}
        \STATE \hspace{1em} Set $g = \text{normalize}(C_{\ell})$
        \STATE \hspace{1em} Optimize $C_{\ell}$ under loss \\ \hspace{1.5em} $\mathcal{L}(g \cdot \Phi_{c}((\varphi_c^{-1})^* X_i), \Phi_{c}(g \cdot (\varphi_c^{-1})^* X_i))$
        \STATE \hspace{1em} Filter duplicate cosets using $B$
        
        \STATE
        
        \STATE \textbf{Return} $\{ B_i \}$, $\{ C_\ell \}$
    \end{algorithmic}
    \label{alg:atlas}
\end{algorithm}

\subsubsection{Discovering Infinitesimal Generators}
\label{sec:discovery-lie}
To discover the Lie algebra, we view it as a vector space and learn its basis, also known as the infinitesimal generators (of the group). To enforce the local symmetry condition, we optimize the basis of vectors to minimize the atlas equivariance error of the map $\Phi$ on local charts.

Specifically, we first create a trainable tensor $B$, consisting of $k$ randomly initialized matrices of shape $m \times m$ where $m = \dim\M$. $B$ represents the Lie algebra basis to be discovered. Each basis vector is parametrized by a $m \times m$ matrix because, after exponentiation, it linearly transforms the flattened local neighborhoods of $\M$. In the training loop, we randomly sample an element $x$ from a dataset as well as a coefficient vector $\eta \sim \mathcal{N}_{k}(0, I)$. Using the coefficient vector and $B$, we have a group element: $g = \exp(\sum_{i=1}^{k} \eta_i B_i)$. The loss is the sum of $\mathcal{L}(\Phi_{c}(g \cdot x), g \cdot \Phi_{c}(x))$ over all $\Phi_c$, which measures the equivariance of each $\Phi_{c}$ with respect to the group element $g$. In this case, $\mathcal{L}$ is an error function appropriate to the context.

One problem with the given loss is that it often results in duplicate generators. Although cosine similarity is an establish regularization technique to avoid this issue \citep{yang2023generative, Forestano_2023}, it is sensitive to initial conditions and fails to produce consistent generators on consecutive runs. Therefore, we introduce the standard basis regularization instead, where one applies element-wise absolute value to each generator before applying the cosine similarity function. This incentivizes different vectors to share as little non-zero positions as possible, thereby driving the basis into standard form. We observe more interpretable results that are consistent across runs, albeit with a higher rate of duplicate generators. The standard basis regularization is provided below, where $|B|$ denotes element-wise absolute value and $\gamma$ is a positive weighting constant:
\begin{equation}
    \mathcal{L}_{\text{sbr}}(B) = \gamma \sum_{i = 1}^{k}\sum_{j = i + 1}^{k} \frac{\mathrm{vec}(| B_i|) \cdot \mathrm{vec}(| B_j |)}{\| \mathrm{vec}(B_i) \| \| \mathrm{vec}(B_j) \|} 
\end{equation}

We prove a result about the global minima of $\mathcal{L}_{\text{sbr}}$ under certain conditions in \cref{sec:proof}. We also list additional regularizations and a method for selecting the hyperparameter $k$ in \cref{sec:addtional_impl_details}.  

\subsubsection{Discovering Discrete Symmetries}

\label{sec:discovery-coset}
\begin{figure}[h]
    \vskip 0.1in
    \centering
    \includegraphics[width=.45\textwidth]{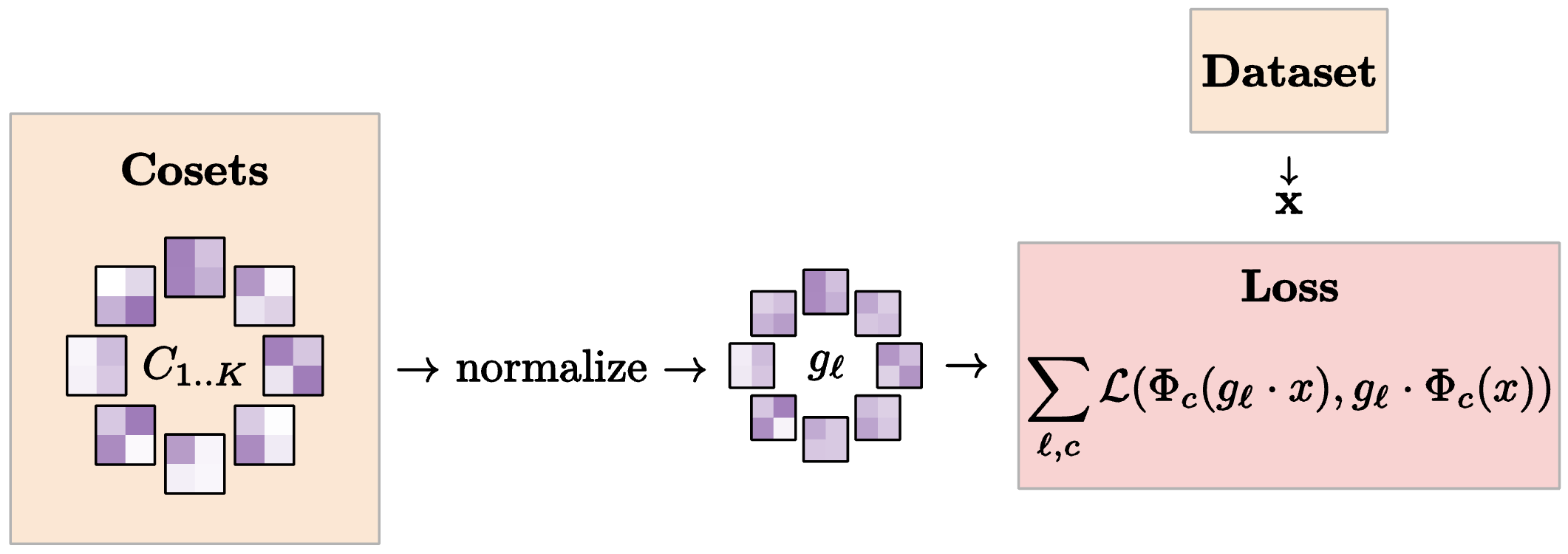}
    \caption{Discrete discovery training loop of \texttt{AtlasD}. All $K$ purple and white squares depict a representative of a discovered coset. The matrices are optimized so that their normalized forms become elements of the ground truth cosets.}
    \label{fig:ls-coset-train}
    \vskip -0.1in
\end{figure}

Many symmetry discovery methods only discover a Lie algebra basis, limiting the results to connected Lie groups. In practice, groups such as $\mathrm{O}(2)$ and $\mathrm{SO}(1, 3)$ have multiple connected components, which is a natural consequence of discrete symmetries such as reflections. In this subsection, we introduce a method for discovering discrete symmetries by identifying the $G_0$-cosets in the component group $G/G_0$.

The discovery of these cosets faces several challenges. For one, we cannot parameterize the search space through a Lie algebra since there is no real-valued matrix that maps to an orientation reversing matrix via the exponential map. A discovery method operating on a Lie algebra is unable to realize both connected components of $\mathrm{O}(2)$ since it includes a reflection. Moreover, even when the search space is set to $\mathrm{GL}(n)$, we observe an abundance of local minima. Unless the seeded matrix is already close to a coset, it may fail to converge to anything useful. 

To narrow the search space, we first assume the target group contains a finite number of connected components, which applies to most finite-dimensional Lie groups of interest. This implies we only need to consider transformations whose determinant has absolute value $1$.

In the discovery process, we create a trainable tensor $C$ that contains representatives of $G_0$-cosets. $C$ is initially set to $K$ random matrices in $\R^{m \times m}$, where $K$ is chosen to be significantly larger than the expected number of cosets. Each $C_{\ell}$ is then independently optimized according to the loss of $\mathcal{L}(\Phi_{c}(\text{normalize}(C_{\ell}) \cdot x),  \text{normalize}(C_{\ell}) \cdot \Phi_{c}(x))$ across all localized functions $\Phi_{c}$ and all $x \in \mathcal{X}$. The normalize function scales a matrix so that its determinant has absolute value $1$. 

After convergence, the top $q$ matrices in $C$ by loss value are taken to be the representatives of the ground truth cosets. We avoid duplicate cosets by comparing $C_i$ to $C_j$ and checking if $C_{i} C_{j}^{-1}$ belongs to the identity component, specified by the already discovered Lie algebra. In particular, we see if $\min_{t \in \R^{k}} \| C_i C_j^{-1} - \exp(\sum {t_{s}}{B_{s}}) \|_2 < \epsilon$ for a threshold $\epsilon$. After applying the filtration process, the final list comprises unique representatives of each coset of the target Lie group.

\subsection{Connection to Gauge Equivariant CNN}
A related notion of local symmetry is introduced by \citet{cohen2019gauge} when defining gauge equivariant CNNs. In short, gauge equivariance implies that one should be able to arbitrarily orient the local coordinate systems used to define input features and compute convolutions. Hence, it is a property of the deep learning \textit{model}, rather than the task function itself. This is a notable difference compared to our work, where the atlas equivariance group is intrinsic to the system and can be discovered. 

The following theorem provides a concrete connection between gauge equivariance and atlas equivariance (proof in \cref{sec:proof}). 
\begin{restatable}{theorem}{equivariance}
    \label{thm:equivariance}%
    Let $M$ be a gauge equivariant CNN that (a) has a linear gauge group $G$, (b) is $\mathcal{A}$ atlas local for some atlas $\mathcal{A}$ with trivial charts, and (c) operates on Euclidean space.
    Then, $M$ is $\mathcal{A}$ atlas equivariant to $G$.
\end{restatable}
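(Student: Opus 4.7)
The plan is to unpack the three hypotheses and reduce the claim to the well-known fact that a gauge equivariant CNN on flat space with a linear structure group coincides with a standard globally $G$-equivariant CNN. First, I would restate hypothesis (a) in the form used by \citet{cohen2019gauge}: the convolution kernels of $M$ satisfy the $G$-equivariance constraint making $M$ insensitive to the choice of local frame at each point, which on a trivialized patch is equivalent to saying that swapping the frame by a constant $g\in G$ transforms the feature fields by the corresponding linear representation without changing the output of the network. Hypothesis (b) then hands us, for each chart $(U_c,\varphi_c)$, a localized map $\Phi_c$ defined on the flat Euclidean domain $\varphi_c(U_c)$; and hypothesis (c) ensures that on this trivial chart the frame bundle is trivial, so a constant choice of frame is globally well-defined across $\varphi_c(U_c)$.

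Second, I would translate the spatial group action $(g\cdot E)(p)=E(g^{-1}p)$ into a gauge-theoretic statement. On a trivial Euclidean chart with a linear gauge group, a rigid transformation of the domain by $g$ can be realized equivalently as (i) relabeling the base points by $g$ and (ii) applying the same constant frame change $g$ across all points; because $M$ is a gauge equivariant CNN built from translation- and frame-equivariant kernels on a flat space, both operations commute with $\Phi_c$. More concretely, I would write out the convolution $(\Phi_c E)(p)=\int K(p-q)\,E(q)\,dq$, use translation invariance of the integral and the kernel constraint $K(g\cdot v)=\rho(g)K(v)\rho(g)^{-1}$ that gauge equivariance forces, and compute $\Phi_c(g\cdot E)(p)=g\cdot\Phi_c(E)(p)$ directly by change of variables.

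Third, I would invoke atlas locality to knit the per-chart equivariance statements into the claim. Since $\Phi_c$ recovers $(\varphi_c^{-1})^*\Phi(F)$ on $\varphi_c(U_c)$ for any input $F$, the global equivariance of $\Phi_c$ established in the previous step is exactly the condition demanded by the definition of atlas equivariance. I would conclude by noting that the $0$-padding used outside $\varphi_c(U_c)$ plays no role in the argument because the equality is only required after restricting back to $\varphi_c(U_c)$, and the kernel computation inside that region only uses values that the padding does not disturb up to the kernel's support.

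The main obstacle will be the conceptual bridge in the second step: gauge transformations in \citet{cohen2019gauge} are bundle automorphisms that act on fibers without moving base points, whereas the atlas equivariance action moves points. The careful part is showing that, on a trivial chart with a \emph{linear} gauge group acting by matrix multiplication on coordinates, a spatial $g$-transformation together with the induced pushforward on tangent frames is indistinguishable from a constant gauge change plus a translation reparametrization; this is exactly where hypotheses (a), (b), and (c) are all needed simultaneously, and any non-linearity in $G$ or non-triviality of the chart would break the identification.
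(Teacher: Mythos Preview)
Your plan is essentially the same as the paper's: reduce atlas equivariance to the statement that a gauge equivariant CNN on a flat trivial chart is globally $G$-equivariant, and then invoke atlas locality chart by chart. The paper executes this more tersely in two ways. First, rather than carrying out the change-of-variables integral you sketch, it simply observes that a gauge equivariant convolution on Euclidean space is, by definition, a $G$-steerable convolution in the sense of \citet{e2cnn}, and cites that these are globally $G$-equivariant. Second, the paper makes the choice of localized function explicit: since the charts are inclusion maps, $M$ itself serves as every $M_c$, so one only needs global equivariance of $M$ once, not a separate argument per chart.

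Two small points where your write-up would benefit from adjustment. The paper's definition of atlas equivariance uses the purely spatial action $(g\cdot E)(p)=E(g^{-1}p)$ with no fiber component, so the feature representations are trivial; your general kernel constraint $K(g\cdot v)=\rho(g)K(v)\rho(g)^{-1}$ is more machinery than needed and collapses to $K(g^{-1}v)=K(v)$, which is exactly what the paper assumes. Relatedly, you only discuss the convolutional layers; the paper separately notes that pointwise nonlinearities are automatically equivariant precisely because the fiber representation is trivial, and you should say this explicitly rather than folding it into ``built from translation- and frame-equivariant kernels.'' Your extended discussion of the gauge-versus-spatial ``conceptual bridge'' is correct but largely unnecessary once trivial representations are assumed: there is no frame change to track, only the base-point relabeling.
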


In practice, a gauge equivariant CNN is neither meant to operate on Euclidean space nor completely $\mathcal{A}$ atlas local. However, the result is approximately true for an arbitrary manifold as manifolds are locally flat. This implies that if a system is atlas equivariant for some group $G$, it is logical to set the gauge group of a downstream gauge equivariant CNN to $G$. We employ this technique as an application of our discovered symmetries below.

\subsection{Implementation Notes}
Due to issues such as discretization, noise, boundary conditions, or limited a priori knowledge of a perfect atlas, real-world datasets may only be approximately, not exactly, $\mathcal{A}$ atlas local. To mitigate this issue, we sometimes allow the $\Phi_c$ predictors to look slightly outside of the associated chart, i.e. the radius of the input chart is higher than the radius of the output chart for any given $\Phi_c$. This provides the localized functions with additional context that may be missing from the unmodified input. Additionally, to avoid boundaries and awkward topologies (e.g. poles of a spherical mesh), we partially deviate from the definition of an atlas and do not require that the charts fully cover the manifold. Empirically, if the charts span the majority of $\M$ rather than fully covering $\M$, our method is still able to discover local symmetries within the given region. 

\section{Experiments}
We experiment on the following tasks to validate our methodology and implementation: (1) top-quark tagging task for direct comparison with global symmetry discovery baselines; (2) synthetic partial differential equation to test our model's sensitivity to various atlases; (3) projected MNIST classification and ClimateNet weather segmentation tasks to highlight our success in the discovery of atlas equivariances as well as the performance gains when discovered symmetries are incorporated into downstream models. Additional details about each experiment, such as chart sizes and other hyperparameters, are present in \cref{sec:experiment_details}. Additional experiments and ablations are in \cref{sec:additional_experiments}.

\subsection{Global Symmetry Comparison}

\begin{figure}[h]
    \vskip 0.1in
    \centering
    \includegraphics[width=0.475\textwidth]{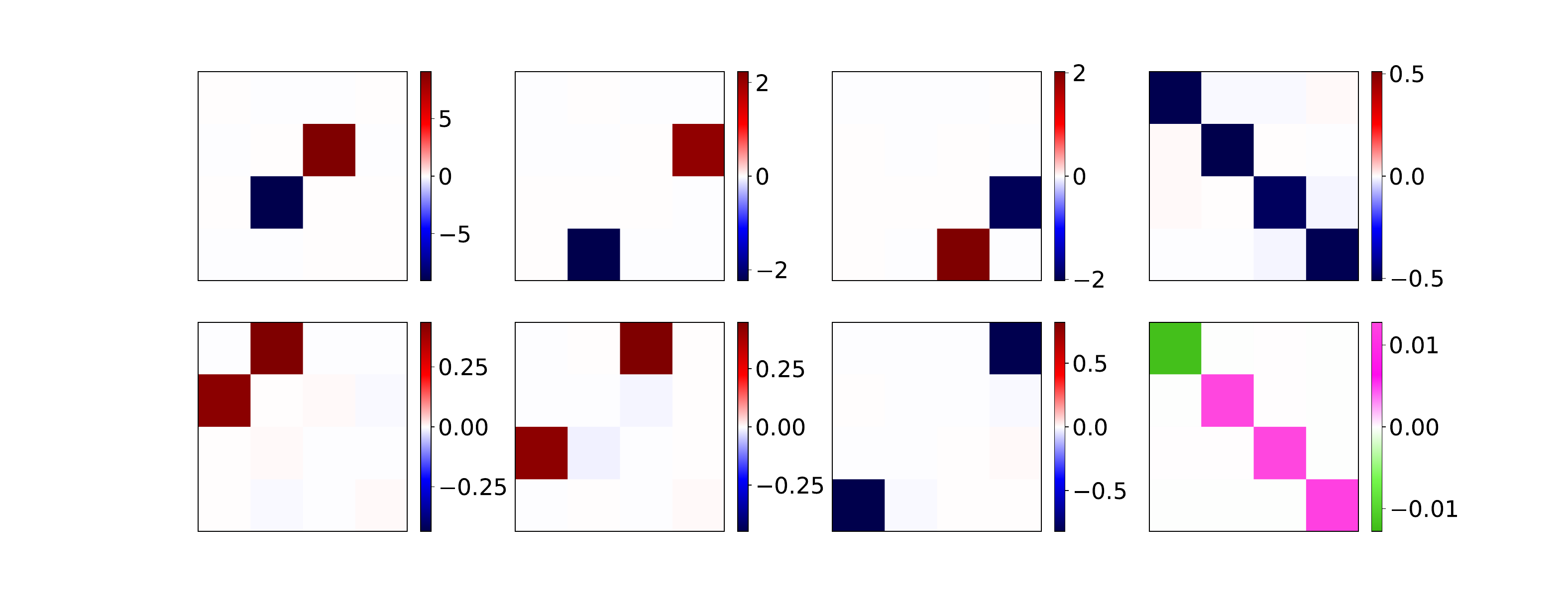}
    \caption{\texttt{AtlasD} discovers $\mathrm{O}^+(1, 3)$ in the top tagging task. Here, each red and blue heatmap denotes a Lie algebra basis element. For each generator, the value of its entries are depicted by the individual colors. When read in row-major order, generators 0, 1, 2 correspond to $\mathrm{SO}(3)$ rotation and generators 4, 5, 6 indicate boosts. The pink and green heatmap in the bottom-right displays the computed invariant metric.}
    \label{fig:toptagging_basis}
    \vskip -0.1in
\end{figure}%
\begin{figure}[h]
    \centering
    \includegraphics[width=0.45\textwidth]{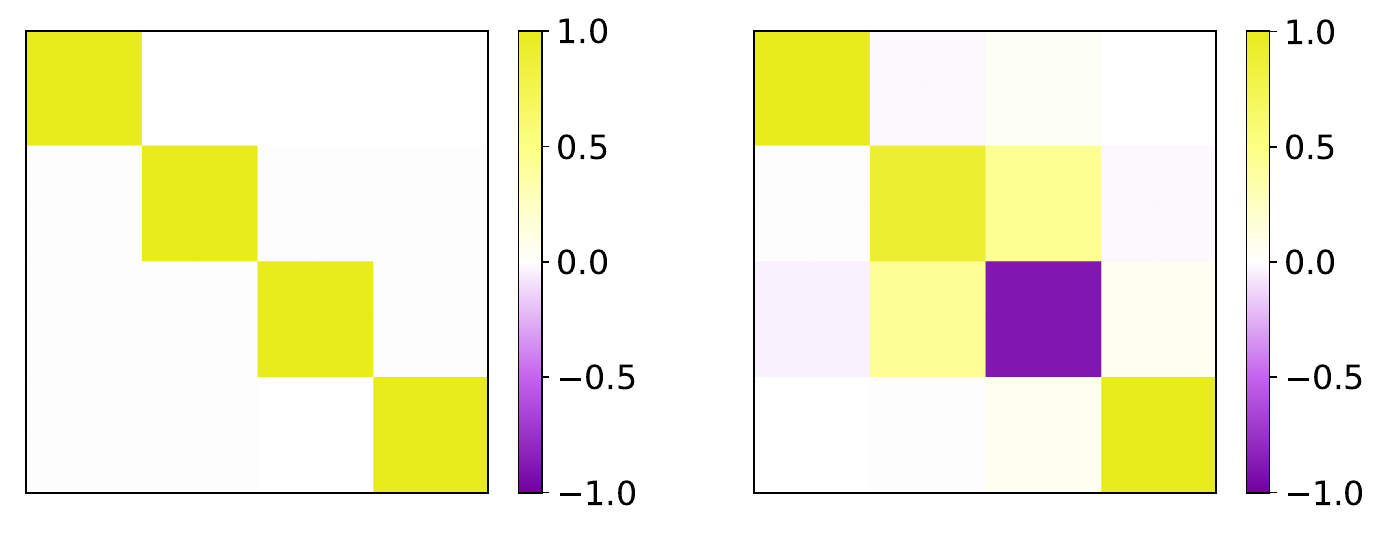}
    \caption{\texttt{AtlasD} discovers two cosets in the top tagging experiment: a representative from the identity and parity component. Each yellow and purple heatmap depicts a coset's representative in matrix form, where the colors denote the values of that matrix's entries.}
    \label{fig:toptagging_cosets}
\end{figure}

To directly compare our method to existing discovery pipelines that focus on global symmetries, we first attempt to learn global invariances in the top quark tagging experiment. Specifically, we compare our results with those of LieGAN \citep{yang2023generative}.

First, we fit a predictor network to the dataset using a $3$-layer MLP. Recall that at this point, we have no information about symmetries of the system and thus this is a non-equivariant model. The trained predictor is assumed to be accurate enough such that its symmetries agree with those of the dataset. Hence, we now seek to find the symmetries of the predictor.

The goal is to classify between top quark and lighter quarks jets present in the Top Quark Tagging Reference Dataset \citep{Kasieczka_Plehn_Thompson_Russel_2020}. The dataset contains 2M observations, consisting of four-momentum of up to $200$ particle jets. The classification is invariant to the entire Lorentz group $\mathrm{O}(1, 3)$, which we will try to discover. 

We use our infinitesimal generator discovery pipeline to learn the invariances of the predictor. We seed our basis with $7$ generators. In Figure \ref{fig:toptagging_basis}, we show that the discovered basis matches closely with that of $\mathrm{SO}^{+}(1, 3)$, the identity component of the Lorentz group. Moreover, computing the invariant tensor using the method from \citet{yang2023generative}, we find that the invariant tensor has a cosine correlation of $0.9996$ with the ground truth Minkowski tensor $\text{diag}(-1, 1, 1, 1)$. This is a strong result that is slightly superior to LieGAN's cosine correlation of $0.9975$.

We then try to discover the various cosets of the symmetry group. We seed our discovery process with $K = 64$ matrices. In the dataset, the time component of all momenta are positive, and hence it is difficult to find the time-reversal generator of the Lorentz group. In the last two heatmaps of Figure \ref{fig:toptagging_basis}, we discover a parity transformation, which means that the entire learned symmetry group is $\mathrm{O}^{+}(1, 3)$. In contrast, LieGAN cannot discover orientation-reversing transformations and hence only reports the identity component $\mathrm{SO}^{+}(1, 3)$.

\begin{table}
\caption{Downstream test results for top tagging task.}
\label{toptagging-results}
\vskip 0.1in
\begin{center}
\begin{small}
\begin{sc}
\begin{tabular}{l|cc}
\toprule
Model & Accuracy & AUROC \\
\midrule
LorentzNet  & 0.941 $\pm 0.0010$& 0.9862 $\pm 0.0004$ \\
\textbf{AtlasGNN} & 0.939 $\pm 0.0002$ & 0.9852 $\pm 0.0001$  \\
LieGNN    & 0.938 $\pm 0.0001$ & 0.9849 $\pm 0.0001$ \\
LorentzNet (w/o)   & 0.935 $\pm 0.001$& 0.9835 $\pm 0.0003$ \\
EGNN    & 0.925 $\pm 0.0001$ & 0.9799 $\pm 0.0004$ \\
\bottomrule
\end{tabular}
\end{sc}
\end{small}
\end{center}
\vskip -0.1in

\end{table}

Finally, we use the computed invariant metric tensor as an inductive bias to construct a well-performing classification model. Specifically, we create AtlasGNN by modifying LorentzNet \citep{gong2022efficient} to use our discovered metric instead of the Minkowski tensor. In Table \ref{toptagging-results}, we observe better accuracy and AUROC than many baselines and nearly match LorentzNet, which uses ground truth symmetry.

\subsection{Partial Differential Equation}

\begin{figure}[h]
    \vskip 0.1in
    \centering
    \begin{subfigure}[t]{.45\textwidth}
    \centering
    \includegraphics[width=\linewidth, viewport = 0 -25 650 275, clip]{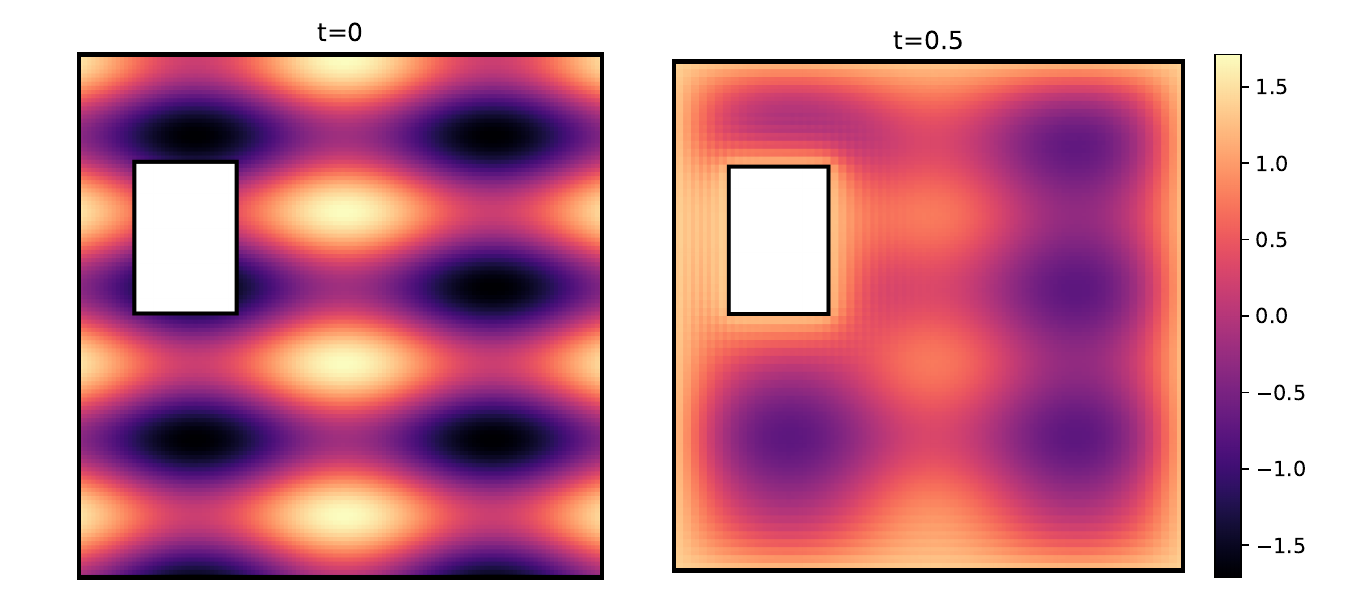}
    \caption{Example input and output field}
    \end{subfigure}
    
    \begin{subfigure}[t]{.45\textwidth}
    \centering
    \includegraphics[width=\linewidth]{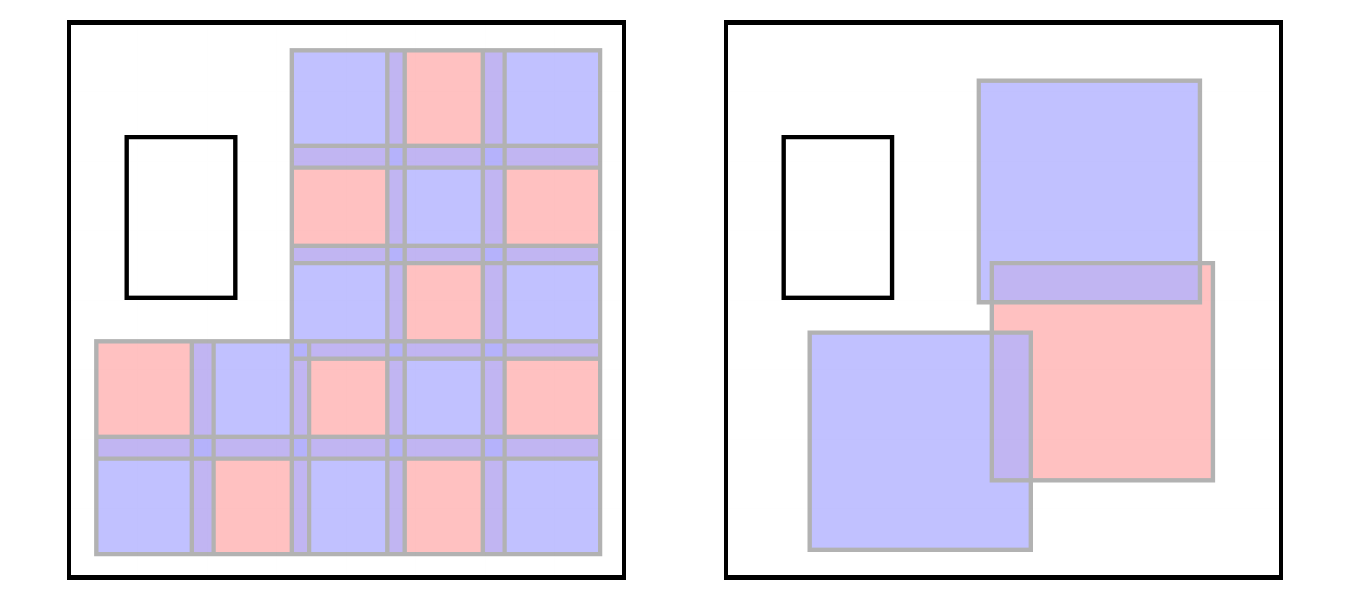}
    \caption{Charts in each atlas}
    \end{subfigure}%
    
    \caption{Illustration of PDE experiment. The top row depicts an example input and output scalar field for the heat experiment. The bottom row shows the two atlases used for the experiment. Each blue or red square represents an individual chart.}
    \label{fig:pde-setup}
    \vskip -0.1in
\end{figure}

Next, we want to see if our method can indeed learn atlas equivariances and also measure its sensitivity to various atlas configurations. Specifically, we experiment if our model can discover the local symmetries of the heat equation $\frac{\partial u}{\partial t} = \alpha(\frac{\partial^2 u}{\partial x^2} + \frac{\partial^2 u}{\partial y^2})$ in $\R^2$ (Figure \ref{fig:pde-setup}a). The task function in this case simulates heat flow for $0.5$ seconds given an initial condition. In the simulation, we exclude a certain rectangular region and treat it as a heat source, thereby breaking any global symmetry while keeping $\mathrm{O}(2)$ local symmetry sufficiently far from any boundary.

To test the sensitivity of our method to different atlases, we perform our experiments with one atlas containing $19$ charts and another containing $3$ charts (Figure \ref{fig:pde-setup}b). In either case, we seed the model with a single infinitesimal generator and $K = 16$ cosets and report the unique cosets from the top $q=8$. In Figure \ref{fig:pde_results}, we demonstrate that \texttt{AtlasD} is able to accurately recover the $\mathrm{O}(2)$ atlas equivariance group in both situations. However, the first atlas does slightly outperform the second.

\begin{figure}[h]
    \centering
    \vskip 0.1in
    \includegraphics[width=0.4\textwidth]{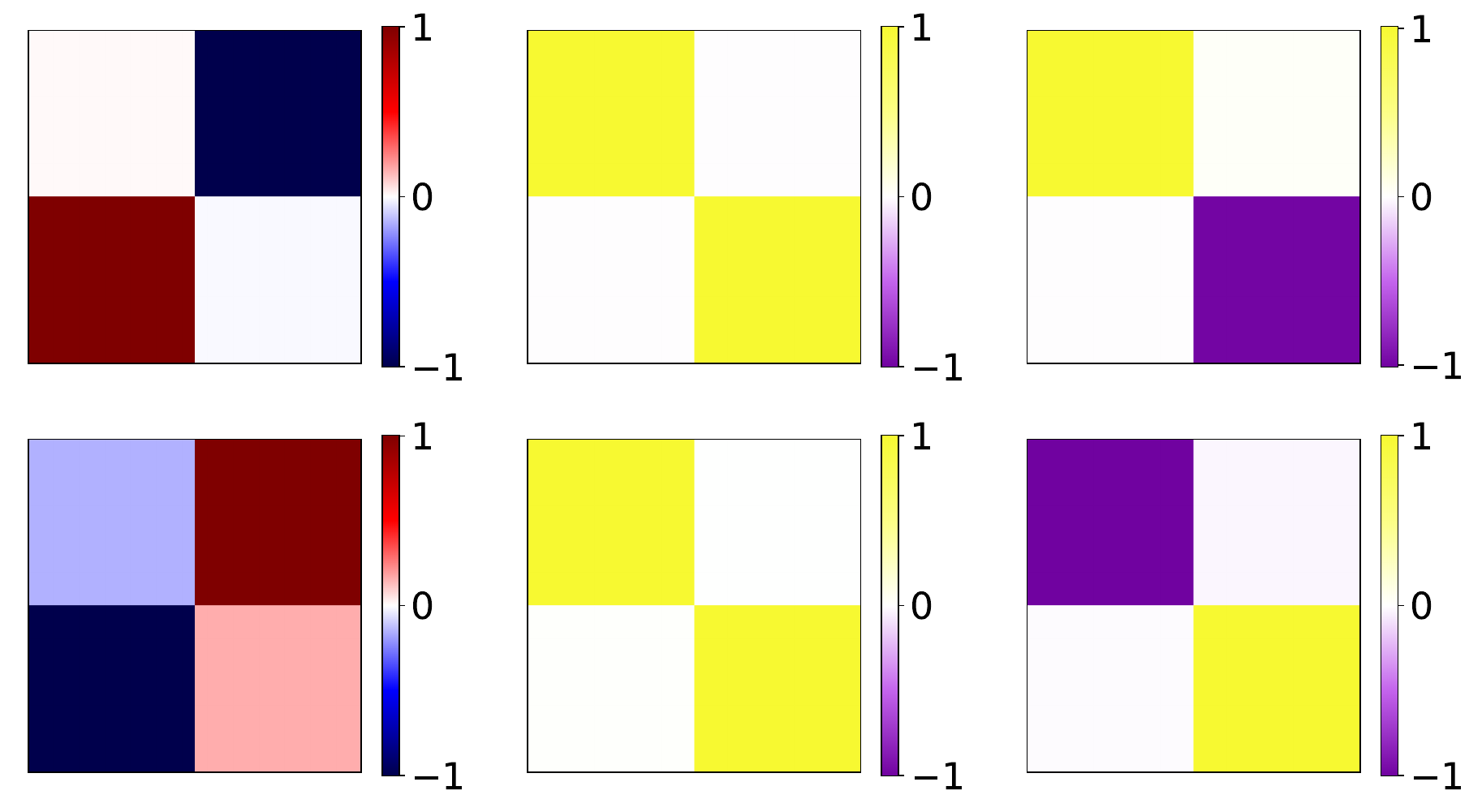}
    \caption{PDE discovered symmetry. The results for the first and second atlas are depicted in the top and bottom rows, respectively. In each case, the leftmost entry is the discovered infinitesimal generator, and the right two columns are the discovered cosets. }
    \vskip -0.1in
    \label{fig:pde_results}
\end{figure}

We also run the global symmetry discovery baseline LieGG in a similar setup. The full details are deferred to \cref{sec:liegg_cmp}. In short, LieGG fails to discover any global symmetry due to the heat source (Figure \ref{fig:liegg-pde}).

\begin{figure}[h]
    \centering
    \includegraphics[width=0.4\textwidth]{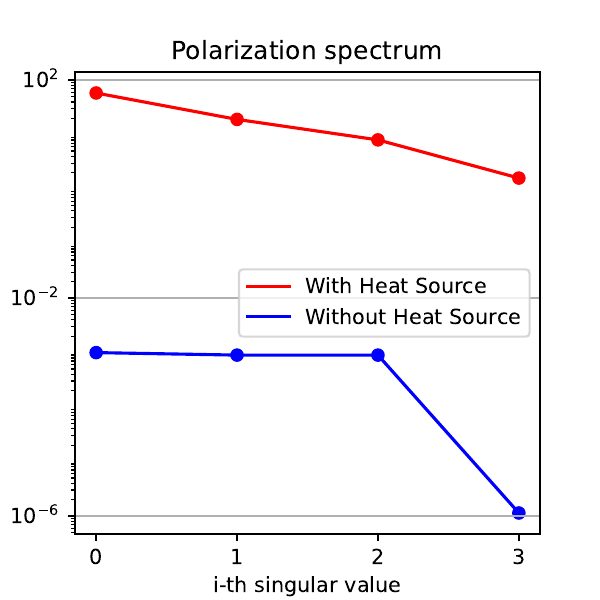}
    \caption{Singular values of generators discovered by LieGG in modified PDE experiment. We run twice: once without the rectangular heat source (blue) and once with the heat source (red). The run without the heat source provides a reference for singular values in the case that there is global symmetry. }
    \label{fig:liegg-pde}
\end{figure}

\subsection{MNIST on Sphere}

\begin{figure}[h]
    \centering
    \vskip 0.1in
\begin{subfigure}[c]{.25\textwidth}
\centering
\includegraphics[width=\textwidth]{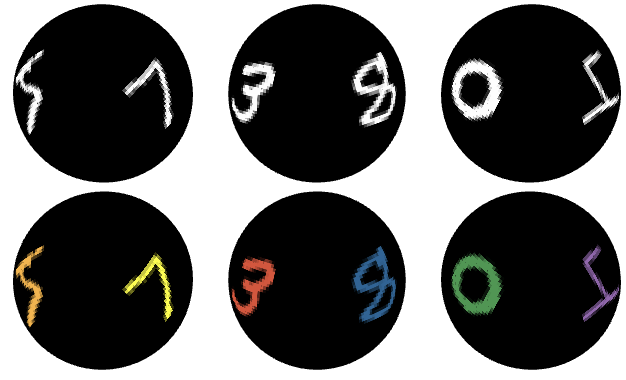}
\caption{}
\end{subfigure}
\hspace{2em}
\begin{subfigure}[c]{.125\textwidth}
\centering
\includegraphics[width=\textwidth]{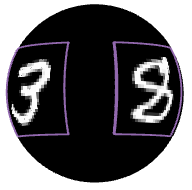}
\caption{}
\end{subfigure}
\caption{
    MNIST experiment setup. (a) The input feature fields (top) are given by three digits rotated and then projected onto the equator of a sphere. To construct the output feature fields (bottom), the model must label each pixel as either background or its numeric value if it is a part of a number. (b) We highlight two of the charts used in our atlas.
}
\label{fig:mnist_setup}
\end{figure}

To highlight the benefits of using our learned results in downstream models, we design a projected MNIST segmentation task. In this experiment, we project three digits from the MNIST dataset onto a sphere (Figure \ref{fig:mnist_setup}). Before projection, each image is randomly rotated up to $60$ degrees clockwise or counterclockwise. The goal of the model is to classify each pixel as either the background or its numeric value. Although the rotation of the digits adds a local symmetry, there is no continuous global symmetry since the position of each of the three digits is fixed. 
For this problem, we construct an atlas by assigning a single chart to the region of each digit. This is an admittedly idealized setup, but our main purpose is to demonstrate the full pipeline.

We then train a predictor for each of the three charts using CNNs. In the discovery process, we seed our model with a single infinitesimal generator. To demonstrate the benefit of considering local symmetry, we compare our results against a modified LieGAN that represents global symmetries as subgroups of $\mathrm{SO}(3)$.

\begin{figure}[h]
    \centering
    \includegraphics[width=0.2\textwidth]{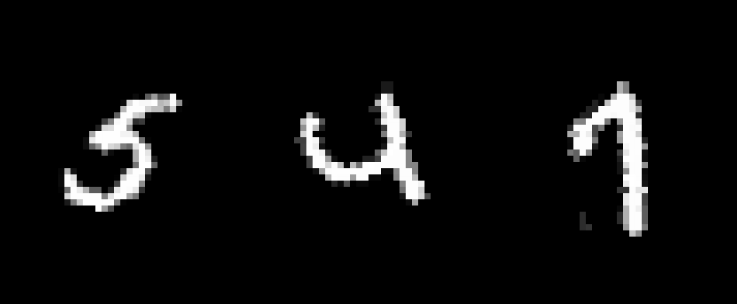} \\
    \includegraphics[width=0.2\textwidth]{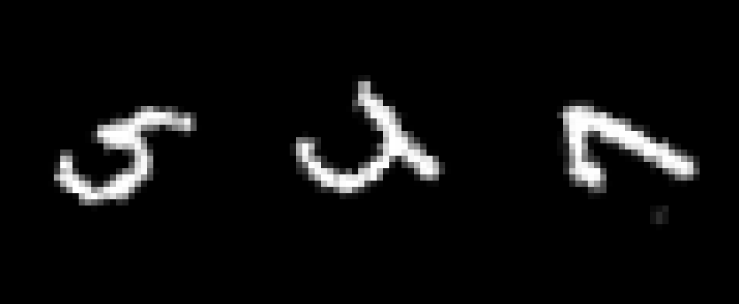} \\
    \includegraphics[width=0.2\textwidth]{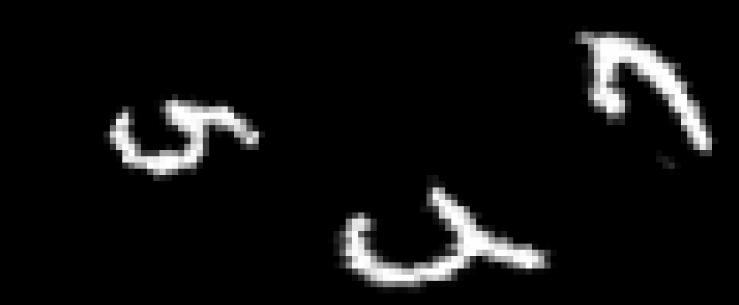} \\
    \caption{Local and global transformations on MNIST. In the upper row, we highlight an element from the dataset in its projected form. In the middle row, we apply a local transformation based upon \texttt{AtlasD}'s discovery. The last row is the result after applying a global rotation suggested by LieGAN.}
    \label{fig:mnist_basis}
\end{figure}

After running the discovery process, we find an approximate $\mathrm{SO}(2)$ generator: $ 
\begin{bmatrix}
    -0.03 & -1.00 \\
    1.00 &  0.02
\end{bmatrix}
$. In Figure \ref{fig:mnist_basis}, we show that applying a local transformation suggested by \texttt{AtlasD} leads to a non-trivial change, but one that still preserves the form of the dataset. On the other hand, the global transformation sampled from LieGAN's result clearly modifies the input out of distribution, suggesting the result is a random rotation rather than an actual symmetry. This highlights a case where considering local symmetry is more appropriate than searching for global symmetry.

In addition, we construct a gauge equivariant CNN using the discovered $\mathrm{SO}(2)$ group and compare it to a regular CNN. 
We train each model on a dataset where the digits are rotated $\pm 60$ degrees, and test it on one where digits are rotated $\pm 180$ degrees. We observe that the inductive bias of the gauge equivariant network allows it to generalize outside of its training set, achieving an accuracy of $0.9381$ compared to $0.6975$ for a vanilla CNN.



\subsection{ClimateNet}

\begin{figure}[t]
    \vskip 0.1in
    \centering
    \includegraphics[width=0.475\textwidth]{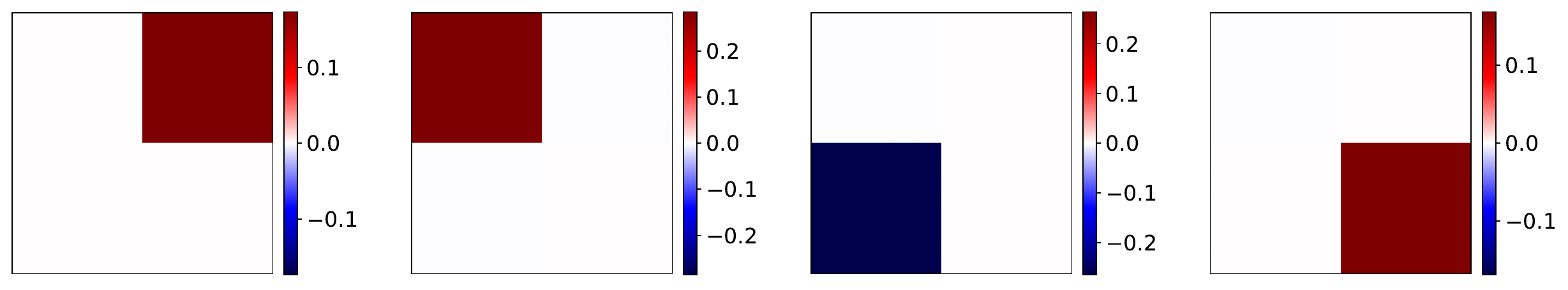}
    \caption{\texttt{AtlasD} discovers $\mathrm{GL}^+$(2) in climate experiment.}
    \label{fig:climate_basis}
    \vskip -0.1in
\end{figure}

For our final experiment, we evaluate our method on a real-world dataset, ClimateNet, proposed by \citet{gmd-14-107-2021}. Each input in the dataset contains $16$ atmospheric variables across the surface of the Earth, and the output is a human label to determine whether each pixel is part of the background, an atmospheric river, or a tropical cyclone. We aim to discover the atlas equivariance group.

We use an atlas that has $4$ charts spread through the surface of the Earth. When we seeded our model with $1, 2,$ or $3$ infinitesimal generators, we find that the resultant basis is not similar across consecutive runs. This suggests that the symmetry group is actually $4$-dimensional. To confirm this, we plot the output of a chart predictor $f$ after applying various linear actions in Figure \ref{fig:charts_transforms}. The figure highlights that the predictor is mildly equivariant to a wide range of actions. In fact, Figure \ref{fig:climate_basis} demonstrates comparable magnitudes of the $4$ generators. All of these suggest that the atlas equivariance group is $\mathrm{GL}^+(2)$.

We compare using the discovered atlas equivariance group to the structure group in a downstream gauge equivariant CNN. Specifically, we use an icoCNN architecture in two different settings \citep{Diaz_Guerra_2023}. For the baseline, we set the gauge group of the icoCNN to be $\mathrm{SO}(2)$ (the structure group). While it is not easy to construct a gauge equivariant CNN using steerable kernels \citep{e2cnn} for a non-compact group such as $\mathrm{GL}^{+}(2)$, the closest approximation is to have the kernel be spatially uniform. That is, all values for a given input-output channel pair are the same for a particular filter. In Table \ref{tab:climate_results}, we show that the ``flat'' kernel CNN is able to match the baseline performance despite having $7$ times fewer parameters. This highlights a benefit to using the discovered group as the gauge group versus choosing the structure group of the manifold. 

\begin{figure}[t]
    \centering
    \includegraphics[width=0.4\textwidth]{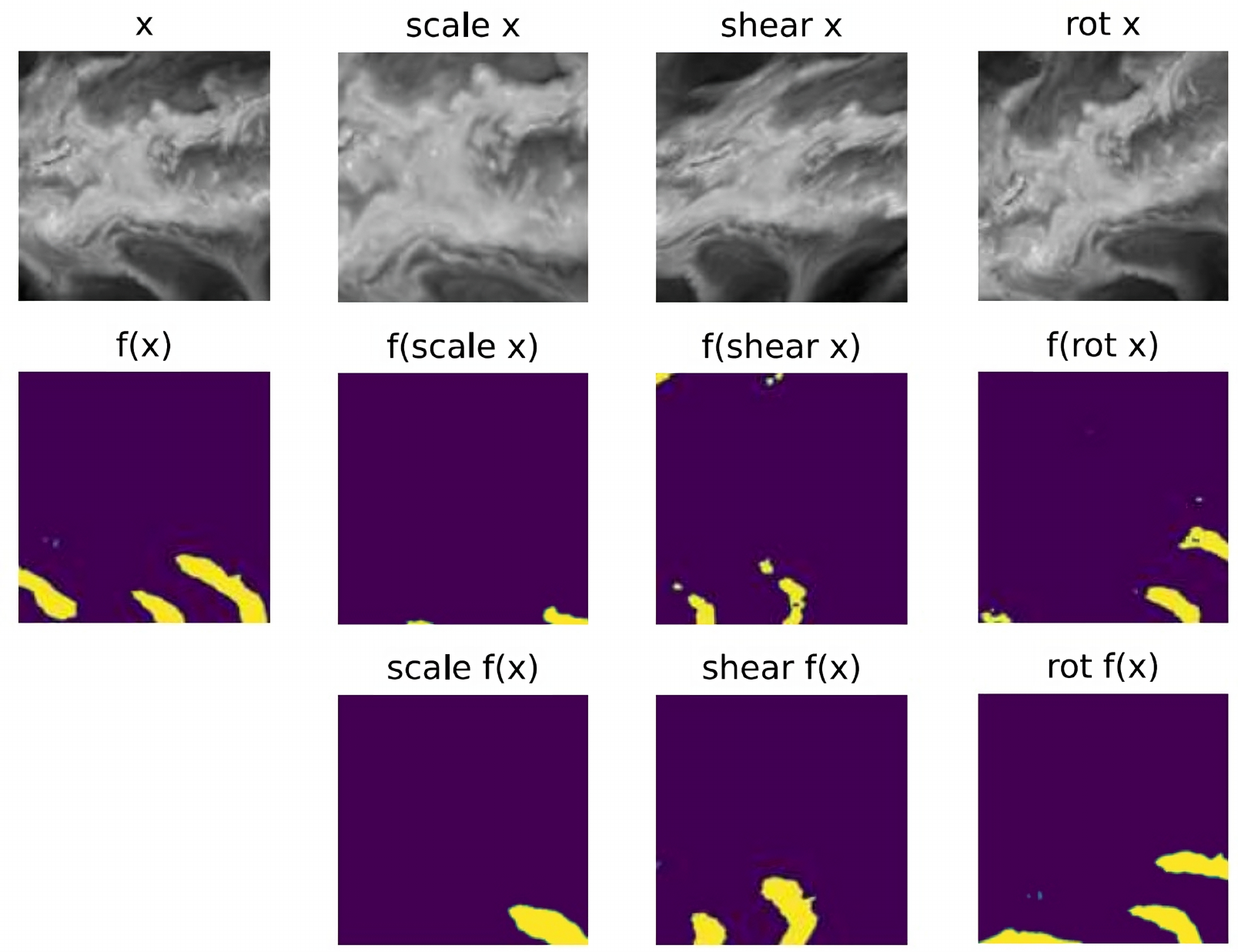}
    \caption{The inputs and outputs of a ClimateNet predictor $f$ after applying various transformations. The plotted $x$ values are visualizations of TMQ. In the outputs, purple indicates the background, and yellow represents the atmospheric river.}
    \label{fig:charts_transforms}
\end{figure}

\begin{table}[h]
\caption{ClimateNet dataset accuracy results. We compute the IoU obtained for the baseline $\mathrm{SO}(2)$ gauge group model, our $\mathrm{GL}^{+}(2)$ gauge group model, and between human experts. We include the last row to demonstrate that the human labelers have a degree of disagreement, providing context for low IoU scores. See \cref{sec:experiment_details} for full results and details.}

\label{tab:climate_results}
\vskip 0.1in
\begin{center}
\begin{small}
\begin{sc}
\begin{tabular}{l|ccccccc}
\toprule
Model & Param $\downarrow$ & BG $\uparrow$ & TC $\uparrow$ & AR $\uparrow$ & Mean $\uparrow$  \\
\midrule
$\mathrm{SO}(2)$  & 766k  & 0.911 & 0.174 & 0.384 & 0.490\\
$\mathrm{GL}^{+}(2)$ & 111k & 0.909 & 0.172 & 0.379 & 0.487 \\
\midrule
Human      & - & 0.914 & 0.248 & 0.347 & 0.503  \\
\bottomrule
\end{tabular}
\end{sc}
\end{small}
\end{center}
\vskip -0.1in
\end{table}

\section{Conclusion}

In this paper, we introduce atlas equivariance and propose \textbf{a}u\textbf{t}omatic \textbf{l}oc\textbf{a}l \textbf{s}ymmetry \textbf{d}iscovery (\texttt{AtlasD}) as an architecture capable of learning local symmetries. Our key finding is that global symmetries are not enough to describe all useful symmetries of a system. We demonstrate the ability to discover infinitesimal generators and cosets of the atlas equivariance group from a dataset. Moreover, the results prove the atlas equivariance group also serves as an inductive bias in downstream gauge equivariant networks. 

While our method effectively discovers atlas equivariances, atlas equivariances only describe a subset of all diffeomorphisms of a manifold. We set the stage for future work to explore the discovery of larger groups. In addition, while \texttt{AtlasD} proved resilient to modifications of the given atlas, a priori knowledge of a suitable atlas is still important and may not always be available. An extension to our work can  develop a method that discovers the atlas in tandem to the atlas equivariance group. Finally, one may consider symmetries that act on both the manifold and the features, such as point symmetries in partial differential equation systems.

\section*{Impact Statement}
This paper presents work whose goal is to advance the field of 
Machine Learning. There are many potential societal consequences 
of our work, none of which we feel must be specifically highlighted here.

\section*{Acknowledgement}
This work was supported in part by NSF Grants \#2205093, \#2146343, \#2134274, CDC-RFA-FT-23-0069, the U.S. Army Research Office
under Army-ECASE award W911NF-07-R-0003-03, the U.S. Department Of Energy, Office of Science, IARPA HAYSTAC Program,  DARPA AIE FoundSci and DARPA YFA.

\bibliography{main}

\begin{thebibliography}{47}
\providecommand{\natexlab}[1]{#1}
\providecommand{\url}[1]{\texttt{#1}}
\expandafter\ifx\csname urlstyle\endcsname\relax
  \providecommand{\doi}[1]{doi: #1}\else
  \providecommand{\doi}{doi: \begingroup \urlstyle{rm}\Url}\fi

\bibitem[Artin(2011)]{artin2011algebra}
Artin, M.
\newblock \emph{Algebra}.
\newblock Pearson Education, 2011.
\newblock ISBN 9780132413770.

\bibitem[Benton et~al.(2020)Benton, Finzi, Izmailov, and Wilson]{NEURIPS2020_cc8090c4}
Benton, G., Finzi, M., Izmailov, P., and Wilson, A.~G.
\newblock Learning invariances in neural networks from training data.
\newblock In Larochelle, H., Ranzato, M., Hadsell, R., Balcan, M., and Lin, H. (eds.), \emph{Advances in Neural Information Processing Systems}, volume~33, pp.\  17605--17616. Curran Associates, Inc., 2020.

\bibitem[Bronstein et~al.(2021)Bronstein, Bruna, Cohen, and Veli{\v{c}}kovi{\'c}]{bronstein2021geometric}
Bronstein, M.~M., Bruna, J., Cohen, T., and Veli{\v{c}}kovi{\'c}, P.
\newblock Geometric deep learning: Grids, groups, graphs, geodesics, and gauges.
\newblock \emph{arXiv preprint arXiv:2104.13478}, 2021.

\bibitem[Brown \& Lunter(2018)Brown and Lunter]{lunter_equivariantbayesianCNN}
Brown, R.~C. and Lunter, G.
\newblock {An equivariant Bayesian convolutional network predicts recombination hotspots and accurately resolves binding motifs}.
\newblock \emph{Bioinformatics}, 35\penalty0 (13):\penalty0 2177--2184, 11 2018.
\newblock ISSN 1367-4803.
\newblock \doi{10.1093/bioinformatics/bty964}.
\newblock URL \url{https://doi.org/10.1093/bioinformatics/bty964}.

\bibitem[Cohen \& Welling(2016)Cohen and Welling]{cohen_groupECNN}
Cohen, T. and Welling, M.
\newblock Group equivariant convolutional networks.
\newblock In Balcan, M.~F. and Weinberger, K.~Q. (eds.), \emph{Proceedings of The 33rd International Conference on Machine Learning}, volume~48 of \emph{Proceedings of Machine Learning Research}, pp.\  2990--2999, New York, New York, USA, 20--22 Jun 2016. PMLR.

\bibitem[Cohen et~al.(2019)Cohen, Weiler, Kicanaoglu, and Welling]{cohen2019gauge}
Cohen, T., Weiler, M., Kicanaoglu, B., and Welling, M.
\newblock Gauge equivariant convolutional networks and the icosahedral cnn.
\newblock In \emph{International conference on Machine learning}, pp.\  1321--1330. PMLR, 2019.

\bibitem[Cohen et~al.(2018)Cohen, Geiger, Köhler, and Welling]{cohen_spherical}
Cohen, T.~S., Geiger, M., Köhler, J., and Welling, M.
\newblock Spherical {CNN}s.
\newblock In \emph{International Conference on Learning Representations}, 2018.

\bibitem[De~Haan et~al.(2020)De~Haan, Weiler, Cohen, and Welling]{de2020gauge}
De~Haan, P., Weiler, M., Cohen, T., and Welling, M.
\newblock Gauge equivariant mesh cnns: Anisotropic convolutions on geometric graphs.
\newblock \emph{arXiv preprint arXiv:2003.05425}, 2020.

\bibitem[Decelle et~al.(2019)Decelle, Martin-Mayor, and Seoane]{decelle_localsym}
Decelle, A., Martin-Mayor, V., and Seoane, B.
\newblock Learning a local symmetry with neural networks.
\newblock \emph{Phys. Rev. E}, 100:\penalty0 050102, Nov 2019.
\newblock \doi{10.1103/PhysRevE.100.050102}.
\newblock URL \url{https://link.aps.org/doi/10.1103/PhysRevE.100.050102}.

\bibitem[Dehmamy et~al.(2021)Dehmamy, Walters, Liu, Wang, and Yu]{dehmamy_automaticsymdiscovery}
Dehmamy, N., Walters, R., Liu, Y., Wang, D., and Yu, R.
\newblock Automatic symmetry discovery with lie algebra convolutional network.
\newblock In Ranzato, M., Beygelzimer, A., Dauphin, Y., Liang, P., and Vaughan, J.~W. (eds.), \emph{Advances in Neural Information Processing Systems}, volume~34, pp.\  2503--2515. Curran Associates, Inc., 2021.

\bibitem[Diaz-Guerra et~al.(2023)Diaz-Guerra, Miguel, and Beltran]{Diaz_Guerra_2023}
Diaz-Guerra, D., Miguel, A., and Beltran, J.~R.
\newblock Direction of arrival estimation of sound sources using icosahedral cnns.
\newblock \emph{IEEE/ACM Transactions on Audio, Speech, and Language Processing}, 31:\penalty0 313–321, 2023.
\newblock ISSN 2329-9304.
\newblock \doi{10.1109/taslp.2022.3224282}.
\newblock URL \url{http://dx.doi.org/10.1109/TASLP.2022.3224282}.

\bibitem[Favoni et~al.(2022)Favoni, Ipp, M\"uller, and Schuh]{latticeCNN}
Favoni, M., Ipp, A., M\"uller, D.~I., and Schuh, D.
\newblock Lattice gauge equivariant convolutional neural networks.
\newblock \emph{Phys. Rev. Lett.}, 128:\penalty0 032003, Jan 2022.
\newblock \doi{10.1103/PhysRevLett.128.032003}.
\newblock URL \url{https://link.aps.org/doi/10.1103/PhysRevLett.128.032003}.

\bibitem[Finzi et~al.(2021)Finzi, Benton, and Wilson]{respathwaypriors}
Finzi, M., Benton, G., and Wilson, A.~G.
\newblock Residual pathway priors for soft equivariance constraints.
\newblock In Ranzato, M., Beygelzimer, A., Dauphin, Y., Liang, P., and Vaughan, J.~W. (eds.), \emph{Advances in Neural Information Processing Systems}, volume~34, pp.\  30037--30049. Curran Associates, Inc., 2021.

\bibitem[Forestano et~al.(2023)Forestano, Matchev, Matcheva, Roman, Unlu, and Verner]{Forestano_2023}
Forestano, R.~T., Matchev, K.~T., Matcheva, K., Roman, A., Unlu, E.~B., and Verner, S.
\newblock Deep learning symmetries and their lie groups, algebras, and subalgebras from first principles.
\newblock \emph{Machine Learning: Science and Technology}, 4\penalty0 (2):\penalty0 025027, jun 2023.
\newblock \doi{10.1088/2632-2153/acd989}.
\newblock URL \url{https://dx.doi.org/10.1088/2632-2153/acd989}.

\bibitem[Gabel et~al.(2023)Gabel, Klein, Valperga, Lamb, Webster, Quax, and Gavves]{pmlr-v221-gabel23a}
Gabel, A., Klein, V., Valperga, R., Lamb, J. S.~W., Webster, K., Quax, R., and Gavves, E.
\newblock Learning lie group symmetry transformations with neural networks.
\newblock In Doster, T., Emerson, T., Kvinge, H., Miolane, N., Papillon, M., Rieck, B., and Sanborn, S. (eds.), \emph{Proceedings of 2nd Annual Workshop on Topology, Algebra, and Geometry in Machine Learning (TAG-ML)}, volume 221 of \emph{Proceedings of Machine Learning Research}, pp.\  50--59. PMLR, 28 Jul 2023.

\bibitem[Gerken et~al.(2023)Gerken, Aronsson, Carlsson, Linander, Ohlsson, Petersson, and Persson]{gerken_geometric_deeplearning}
Gerken, J.~E., Aronsson, J., Carlsson, O., Linander, H., Ohlsson, F., Petersson, C., and Persson, D.
\newblock Geometric deep learning and equivariant neural networks.
\newblock \emph{Artif. Intell. Rev.}, 56\penalty0 (12):\penalty0 14605–14662, June 2023.
\newblock ISSN 0269-2821.
\newblock \doi{10.1007/s10462-023-10502-7}.
\newblock URL \url{https://doi.org/10.1007/s10462-023-10502-7}.

\bibitem[Gong et~al.(2022)Gong, Meng, Zhang, Qu, Li, Qian, Du, Ma, and Liu]{gong2022efficient}
Gong, S., Meng, Q., Zhang, J., Qu, H., Li, C., Qian, S., Du, W., Ma, Z.-M., and Liu, T.-Y.
\newblock An efficient lorentz equivariant graph neural network for jet tagging.
\newblock \emph{Journal of High Energy Physics}, 2022\penalty0 (7):\penalty0 30, Jul 2022.
\newblock ISSN 1029-8479.
\newblock \doi{10.1007/JHEP07(2022)030}.
\newblock URL \url{https://doi.org/10.1007/JHEP07(2022)030}.

\bibitem[Hou et~al.(2024)Hou, Li, and You]{hou2024machinelearningsymmetrydiscovery}
Hou, W., Li, M., and You, Y.-Z.
\newblock Machine learning symmetry discovery for classical mechanics, 2024.
\newblock URL \url{https://arxiv.org/abs/2412.14632}.

\bibitem[Kasieczka et~al.(2019)Kasieczka, Plehn, Thompson, and Russel]{Kasieczka_Plehn_Thompson_Russel_2020}
Kasieczka, G., Plehn, T., Thompson, J., and Russel, M.
\newblock Top quark tagging reference dataset, March 2019.
\newblock URL \url{https://doi.org/10.5281/zenodo.2603256}.

\bibitem[Keurti et~al.(2023)Keurti, Pan, Besserve, Grewe, and Sch\"{o}lkopf]{pmlr-v202-keurti23a}
Keurti, H., Pan, H.-R., Besserve, M., Grewe, B.~F., and Sch\"{o}lkopf, B.
\newblock Homomorphism {A}uto{E}ncoder -- learning group structured representations from observed transitions.
\newblock In Krause, A., Brunskill, E., Cho, K., Engelhardt, B., Sabato, S., and Scarlett, J. (eds.), \emph{Proceedings of the 40th International Conference on Machine Learning}, volume 202 of \emph{Proceedings of Machine Learning Research}, pp.\  16190--16215. PMLR, 23--29 Jul 2023.

\bibitem[Kirillov(2008)]{kirillov2008introduction}
Kirillov, A.
\newblock \emph{An Introduction to Lie Groups and Lie Algebras}.
\newblock Cambridge Studies in Advanced Mathematics. Cambridge University Press, 2008.
\newblock ISBN 9780521889698.

\bibitem[Koyama et~al.(2024)Koyama, Fukumizu, Hayashi, and Miyato]{koyama2023neural}
Koyama, M., Fukumizu, K., Hayashi, K., and Miyato, T.
\newblock Neural fourier transform: {A} general approach to equivariant representation learning.
\newblock In \emph{The Twelfth International Conference on Learning Representations, {ICLR} 2024, Vienna, Austria, May 7-11, 2024}. OpenReview.net, 2024.

\bibitem[Krizhevsky et~al.(2017)Krizhevsky, Sutskever, and Hinton]{krizhevsky_imagenet}
Krizhevsky, A., Sutskever, I., and Hinton, G.~E.
\newblock Imagenet classification with deep convolutional neural networks.
\newblock \emph{Commun. ACM}, 60\penalty0 (6):\penalty0 84–90, may 2017.
\newblock ISSN 0001-0782.
\newblock \doi{10.1145/3065386}.
\newblock URL \url{https://doi.org/10.1145/3065386}.

\bibitem[Moskalev et~al.(2022)Moskalev, Sepliarskaia, Sosnovik, and Smeulders]{moskalev2022liegg}
Moskalev, A., Sepliarskaia, A., Sosnovik, I., and Smeulders, A.
\newblock Liegg: Studying learned lie group generators.
\newblock In Koyejo, S., Mohamed, S., Agarwal, A., Belgrave, D., Cho, K., and Oh, A. (eds.), \emph{Advances in Neural Information Processing Systems}, volume~35, pp.\  25212--25223. Curran Associates, Inc., 2022.

\bibitem[Petrache \& Trivedi(2023)Petrache and Trivedi]{approximationtradeoff}
Petrache, M. and Trivedi, S.
\newblock Approximation-generalization trade-offs under (approximate) group equivariance.
\newblock In Oh, A., Naumann, T., Globerson, A., Saenko, K., Hardt, M., and Levine, S. (eds.), \emph{Advances in Neural Information Processing Systems}, volume~36, pp.\  61936--61959. Curran Associates, Inc., 2023.

\bibitem[Prabhat et~al.(2021)Prabhat, Kashinath, Mudigonda, Kim, Kapp-Schwoerer, Graubner, Karaismailoglu, von Kleist, Kurth, Greiner, Mahesh, Yang, Lewis, Chen, Lou, Chandran, Toms, Chapman, Dagon, Shields, O'Brien, Wehner, and Collins]{gmd-14-107-2021}
Prabhat, Kashinath, K., Mudigonda, M., Kim, S., Kapp-Schwoerer, L., Graubner, A., Karaismailoglu, E., von Kleist, L., Kurth, T., Greiner, A., Mahesh, A., Yang, K., Lewis, C., Chen, J., Lou, A., Chandran, S., Toms, B., Chapman, W., Dagon, K., Shields, C.~A., O'Brien, T., Wehner, M., and Collins, W.
\newblock Climatenet: an expert-labeled open dataset and deep learning architecture for enabling high-precision analyses of extreme weather.
\newblock \emph{Geoscientific Model Development}, 14\penalty0 (1):\penalty0 107--124, 2021.
\newblock \doi{10.5194/gmd-14-107-2021}.
\newblock URL \url{https://gmd.copernicus.org/articles/14/107/2021/}.

\bibitem[Romero \& Lohit(2022{\natexlab{a}})Romero and Lohit]{learningpartialequivariances}
Romero, D.~W. and Lohit, S.
\newblock Learning partial equivariances from data.
\newblock In Koyejo, S., Mohamed, S., Agarwal, A., Belgrave, D., Cho, K., and Oh, A. (eds.), \emph{Advances in Neural Information Processing Systems}, volume~35, pp.\  36466--36478. Curran Associates, Inc., 2022{\natexlab{a}}.

\bibitem[Romero \& Lohit(2022{\natexlab{b}})Romero and Lohit]{romero_partialequivariances}
Romero, D.~W. and Lohit, S.
\newblock Learning partial equivariances from data.
\newblock In Koyejo, S., Mohamed, S., Agarwal, A., Belgrave, D., Cho, K., and Oh, A. (eds.), \emph{Advances in Neural Information Processing Systems}, volume~35, pp.\  36466--36478. Curran Associates, Inc., 2022{\natexlab{b}}.
\newblock URL \url{https://dl.acm.org/doi/10.5555/3600270.3602912}.

\bibitem[Ronneberger et~al.(2015)Ronneberger, Fischer, and Brox]{10.1007/978-3-319-24574-4_28}
Ronneberger, O., Fischer, P., and Brox, T.
\newblock U-net: Convolutional networks for biomedical image segmentation.
\newblock In Navab, N., Hornegger, J., Wells, W.~M., and Frangi, A.~F. (eds.), \emph{Medical Image Computing and Computer-Assisted Intervention -- MICCAI 2015}, pp.\  234--241, Cham, 2015. Springer International Publishing.
\newblock ISBN 978-3-319-24574-4.

\bibitem[Satorras et~al.(2021)Satorras, Hoogeboom, and Welling]{satorras_ECNN}
Satorras, V.~G., Hoogeboom, E., and Welling, M.
\newblock E(n) equivariant graph neural networks.
\newblock In Meila, M. and Zhang, T. (eds.), \emph{Proceedings of the 38th International Conference on Machine Learning}, volume 139 of \emph{Proceedings of Machine Learning Research}, pp.\  9323--9332. PMLR, 18--24 Jul 2021.
\newblock URL \url{https://proceedings.mlr.press/v139/satorras21a.html}.

\bibitem[Vakil(2022)]{Vakil_2022}
Vakil, R.
\newblock \emph{The Rising Sea: Foundations of Algebraic Geometry}.
\newblock Princeton University Press, 2022.
\newblock URL \url{https://press.princeton.edu/books/hardcover/9780691268668/the-rising-sea}.

\bibitem[van~der Ouderaa et~al.(2022)van~der Ouderaa, Romero, and van~der Wilk]{relaxingnonstationary}
van~der Ouderaa, T., Romero, D.~W., and van~der Wilk, M.
\newblock Relaxing equivariance constraints with non-stationary continuous filters.
\newblock In Koyejo, S., Mohamed, S., Agarwal, A., Belgrave, D., Cho, K., and Oh, A. (eds.), \emph{Advances in Neural Information Processing Systems}, volume~35, pp.\  33818--33830. Curran Associates, Inc., 2022.

\bibitem[van~der Ouderaa et~al.(2023)van~der Ouderaa, Immer, and van~der Wilk]{layerwiseequivariances}
van~der Ouderaa, T., Immer, A., and van~der Wilk, M.
\newblock Learning layer-wise equivariances automatically using gradients.
\newblock In Oh, A., Naumann, T., Globerson, A., Saenko, K., Hardt, M., and Levine, S. (eds.), \emph{Advances in Neural Information Processing Systems}, volume~36, pp.\  28365--28377. Curran Associates, Inc., 2023.

\bibitem[van~der Ouderaa et~al.(2024)van~der Ouderaa, van~der Wilk, and de~Haan]{noether}
van~der Ouderaa, T. F.~A., van~der Wilk, M., and de~Haan, P.
\newblock Noether\textquotesingle s razor: Learning conserved quantities.
\newblock In Globerson, A., Mackey, L., Belgrave, D., Fan, A., Paquet, U., Tomczak, J., and Zhang, C. (eds.), \emph{Advances in Neural Information Processing Systems}, volume~37, pp.\  135943--135965. Curran Associates, Inc., 2024.

\bibitem[Veefkind \& Cesa(2024{\natexlab{a}})Veefkind and Cesa]{cesalearnedequivariance}
Veefkind, L. and Cesa, G.
\newblock A probabilistic approach to learning the degree of equivariance in steerable cnns.
\newblock In \emph{Proceedings of the 41st International Conference on Machine Learning}, ICML'24. JMLR.org, 2024{\natexlab{a}}.

\bibitem[Veefkind \& Cesa(2024{\natexlab{b}})Veefkind and Cesa]{probabilisticlearning}
Veefkind, L. and Cesa, G.
\newblock A probabilistic approach to learning the degree of equivariance in steerable {CNN}s.
\newblock In Salakhutdinov, R., Kolter, Z., Heller, K., Weller, A., Oliver, N., Scarlett, J., and Berkenkamp, F. (eds.), \emph{Proceedings of the 41st International Conference on Machine Learning}, volume 235 of \emph{Proceedings of Machine Learning Research}, pp.\  49249--49309. PMLR, 21--27 Jul 2024{\natexlab{b}}.

\bibitem[Wang et~al.(2023)Wang, Zhu, Park, Jia, Su, Platt, and Walters]{extrinsicequivariance}
Wang, D., Zhu, X., Park, J.~Y., Jia, M., Su, G., Platt, R., and Walters, R.
\newblock A general theory of correct, incorrect, and extrinsic equivariance.
\newblock In Oh, A., Naumann, T., Globerson, A., Saenko, K., Hardt, M., and Levine, S. (eds.), \emph{Advances in Neural Information Processing Systems}, volume~36, pp.\  40006--40029. Curran Associates, Inc., 2023.

\bibitem[Wang et~al.(2022)Wang, Walters, and Yu]{wang2022approximately}
Wang, R., Walters, R., and Yu, R.
\newblock Approximately equivariant networks for imperfectly symmetric dynamics.
\newblock In \emph{International Conference on Machine Learning}. PMLR, 2022.

\bibitem[Weiler \& Cesa(2019)Weiler and Cesa]{e2cnn}
Weiler, M. and Cesa, G.
\newblock {General E(2)-Equivariant Steerable CNNs}.
\newblock In \emph{Conference on Neural Information Processing Systems (NeurIPS)}, 2019.

\bibitem[Weiler et~al.(2021)Weiler, Forré, Verlinde, and Welling]{weiler_tutorial}
Weiler, M., Forré, P., Verlinde, E., and Welling, M.
\newblock Coordinate independent convolutional networks -- isometry and gauge equivariant convolutions on riemannian manifolds, 2021.
\newblock URL \url{https://arxiv.org/abs/2106.06020}.

\bibitem[Winkels \& Cohen(2018)Winkels and Cohen]{winkels_GCNN}
Winkels, M. and Cohen, T.~S.
\newblock 3d g-{CNN}s for pulmonary nodule detection.
\newblock In \emph{Medical Imaging with Deep Learning}, 2018.

\bibitem[Worrall \& Welling(2019)Worrall and Welling]{NEURIPS2019_f04cd739}
Worrall, D. and Welling, M.
\newblock Deep scale-spaces: Equivariance over scale.
\newblock In Wallach, H., Larochelle, H., Beygelzimer, A., d\textquotesingle Alch\'{e}-Buc, F., Fox, E., and Garnett, R. (eds.), \emph{Advances in Neural Information Processing Systems}, volume~32. Curran Associates, Inc., 2019.

\bibitem[Wu et~al.(2018)Wu, Tang, Zhang, and Zhang]{Wu2018CGNetAL}
Wu, T., Tang, S., Zhang, R., and Zhang, Y.
\newblock Cgnet: A light-weight context guided network for semantic segmentation.
\newblock \emph{IEEE Transactions on Image Processing}, 30:\penalty0 1169--1179, 2018.

\bibitem[Yang et~al.(2023)Yang, Walters, Dehmamy, and Yu]{yang2023generative}
Yang, J., Walters, R., Dehmamy, N., and Yu, R.
\newblock Generative adversarial symmetry discovery.
\newblock \emph{International Conference on Machine Learning}, 2023.

\bibitem[Yang et~al.(2024)Yang, Dehmamy, Walters, and Yu]{yang24latent}
Yang, J., Dehmamy, N., Walters, R., and Yu, R.
\newblock Latent space symmetry discovery.
\newblock In \emph{Proceedings of the 41st International Conference on Machine Learning}, volume 235 of \emph{Proceedings of Machine Learning Research}, pp.\  56047--56070. PMLR, 2024.

\bibitem[Zaheer et~al.(2017)Zaheer, Kottur, Ravanbhakhsh, P\'{o}czos, Salakhutdinov, and Smola]{zaheer_deepsets}
Zaheer, M., Kottur, S., Ravanbhakhsh, S., P\'{o}czos, B., Salakhutdinov, R., and Smola, A.~J.
\newblock Deep sets.
\newblock In \emph{Proceedings of the 31st International Conference on Neural Information Processing Systems}, NIPS'17, pp.\  3394–3404, Red Hook, NY, USA, 2017. Curran Associates Inc.
\newblock ISBN 9781510860964.

\bibitem[Zhou et~al.(2021)Zhou, Knowles, and Finn]{zhou_metalearning_symmetry}
Zhou, A., Knowles, T., and Finn, C.
\newblock Meta-learning symmetries by reparameterization.
\newblock In \emph{International Conference on Learning Representations}, 2021.

\end{thebibliography}
\bibliographystyle{icml2025}

\newpage
\appendix
\onecolumn

\section{Math}
\label{sec:proof}
\subsection{Connection between Sheaf Morphism and Atlas Locality}

Sheaves provide a framework for relating local data across overlapping regions and are relevant in the context of local functions. In particular, we show that being a sheaf morphism is a strictly stronger condtion than being atlas local. We first summarize the definitions related to sheaves and sheaf morphisms, and refer to \citet{Vakil_2022} for a more detailed introduction.

Suppose that $X$ is a topological space. $\mathcal{F}$ is said to be a \emph{presheaf} (of sets) on $X$ if the following conditions are met:
\begin{enumerate}
    \item For any open set $U \subseteq X$, we have a set $\mathcal{F}(U)$ containing the sections of $F$ over $U$.
    \item If $U$ and $V$ are open sets such that $U \subseteq V$, we have a restriction function $\mathrm{res}_{V, U} : \mathcal{F}(V) \to \mathcal{F}(U)$.
    \item $\mathrm{res}_{U, U}$ must be the identity.
    \item For any triplet of open sets such that $U \subseteq V \subseteq W$, we have  $\mathrm{res}_{W, U} = \mathrm{res}_{V, U} \circ \mathrm{res}_{W, V}$.
\end{enumerate}

Then, any presheaf $\mathcal{F}$ is said to be a \emph{sheaf} (of sets) if the additional two conditions are satisfied:
\begin{enumerate}
    \item Let $I$ be an indexing set and suppose we have an open cover $\{U_i\}_{i \in I}$ of $U$ where $U_i \subseteq U$. For any $f_1, f_2 \in \mathcal{F}(U)$, if $\mathrm{res}_{U, U_i} \ f_1 = \mathrm{res}_{U, U_i} \  f_2$ for all $i \in I$, we must have $f_1 = f_2$ 
    \item For any open cover $\{U_i\}_{i \in I}$ of $U$ and family of sections $\{ f_i \in U_i ~|~ i \in I\}$, if $\mathrm{res}_{U_i, U_{i} \cap U_{j}} \ f_i = \mathrm{res}_{U_j, U_{i} \cap U_{j}}\  f_j$ for all $i, j \in I$ , then there must exist $f \in \mathcal{F}(U)$ so that $\mathrm{res}_{U, U_i} \ f = f_i$ for all $i$.
\end{enumerate}

Finally, a \emph{sheaf morphism} between the sheaves $\mathcal{F}$ and $\mathcal{G}$ is a morphism $\phi: \mathcal{F} \to \mathcal{G}$ specified by a map $\phi_U: \mathcal{F}(U) \to \mathcal{G}(U)$ for each open set $U \subseteq X$ so that $\phi$ commutes with restriction, i.e., if $V \subseteq U$ we have  $\mathrm{res}^{\mathcal{G}}_{U, V} \ \phi_U (\mathcal{F}(U)) = \phi_V( \mathrm{res}^{\mathcal{F}}_{U, V} \  \mathcal{F}(U))$

Now, suppose we have a manifold $\M$ and an atlas specified by a collection of charts $\{ U_c \}_{c = 1}^{n}$. We may consider $\M$ as a topological space whose subbasis is given by the collection of charts. Note that for any fixed $d$, the set of feature fields specified by functions $F: \M \to \mathbb{R}^{d}$ forms a sheaf. In particular, let $\phi$ be a sheaf morphism between feature fields of dimension $d_{\text{in}}$ to those of dimension $d_{\text{out}}$. Then, for each $U_c$ and arbitrary $F: \M \to \mathbb{R}^{d_{\text{in}}}$, we have that $\mathrm{res}^{\text{out}}_{\M, U_c} \ \phi_\M (F) = \phi_{U_c}(\mathrm{res}^{\text{in}}_{\M, U_c} \ F)$ for all $U_c$. To finally prove $\Phi = \phi_\M$ is atlas local, we can construct each $\Phi_c$ using $\phi_{U_c}$. This is because for any chart $U_c$, the sheaf morphism condition guarantees that the local behavior of $\phi_\M$ is fully captured by $\phi_{U_c}$.

However, unlike sheaf morphisms, atlas locality does not require a localized map on all open subsets, e.g., the intersection of charts, and is thus a weaker condition.

\subsection{Atlas equivariance of gauge equivariant CNN}

\equivariance*

\begin{proof}
    We first clarify a few definitions. By trivial charts, we mean that $\varphi_c$ is the inclusion map. We model a gauge equivariant CNN as a series of convolutional layers and pointwise nonlinearities \citep{cohen2019gauge}. To enforce gauge equivariance, we will require all kernels $K: \R^{\text{d}} \to \R^{C_{\text{in}} \times C_{\text{out}}}$ in the network to satisfy $K(g^{-1} v) = K(v)$ for $g \in G$.

    To show that such a network $M$ is $G$ atlas equivariant, we must prove that there exists $G$-equivariant localized functions $M_c$. Recall that in the definition of $\mathcal{A}$ atlas local, $M_c$ has no restriction on its output field outside $\varphi_c(U_c)$. Consequently, since all charts are trivial and $M$ is assumed to be $\mathcal{A}$ atlas local to begin with, $M$ itself is suitable for each $M_c$. 

    It remains to show that $M$ is $G$ equivariant. Indeed, a $G$ gauge equivariant convolutional layer on Euclidean space as presented above is equivalent to a $G$-steerable convolution \citep{e2cnn}. Moreover, $G$-steerable convolutions are globally equivariant. As we assume feature vectors transform trivially in response to a group action, all pointwise nonlinearities are automatically $G$ equivariant. $M$, the composition of $G$ equivariant layers, is then $G$ equivariant.
    
    Thus, $M$ is $\mathcal{A}$ atlas equivariant with respect to $G$.
\end{proof}

\subsection{Argmin of Standard Basis Regularization}
Let $V$ be a $k$-dimensional subspace of $\mathbb{R}^{n}$. We call a basis $\{b_i\}_{i=1}^{k}$ of $V$ \textit{disjoint} if the set of indices of all non-zero elements of $b_i$ and the similar set for $b_j$ are disjoint whenever $i \ne j$. The following theorem gives a result about the arguments of the minima of $\mathcal{L}_{\text{sbr}}$.

\begin{theorem}
    Let $V$ be a $k$-dimensional subspace of $\mathbb{R}^{n}$ for which there exists a disjoint basis. Then, among all possible bases $\{b_i\}_{i=1}^{k}$ of $V$, $\mathcal{L}_{\text{sbr}}(b)$ is minimal if and only if $b$ is disjoint.
\end{theorem}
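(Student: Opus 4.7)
The plan is to exhibit $\mathcal{L}_{\text{sbr}}$ as a sum of non-negative cosine-similarity terms that all simultaneously vanish exactly on disjoint bases, so that the minimum value is zero and is attained precisely on such bases. First, I would observe that any basis vector is non-zero, so every denominator $\|\mathrm{vec}(b_i)\|$ is strictly positive and each summand in $\mathcal{L}_{\text{sbr}}(b)$ is well-defined. Since the entries of $|b_i|$ and $|b_j|$ are non-negative, the numerator $\mathrm{vec}(|b_i|) \cdot \mathrm{vec}(|b_j|)$ is non-negative, yielding the global lower bound $\mathcal{L}_{\text{sbr}}(b) \geq 0$ over all bases of $V$.

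Next, I would pin down the zero set of each summand. Writing $\mathrm{vec}(|b_i|) \cdot \mathrm{vec}(|b_j|) = \sum_{\ell} |(b_i)_\ell|\,|(b_j)_\ell|$ as a sum of non-negative reals, the entire sum vanishes iff $|(b_i)_\ell|\,|(b_j)_\ell| = 0$ at every coordinate $\ell$. Because the support of $|b_i|$ coincides with the support of $b_i$, this is equivalent to $b_i$ and $b_j$ having disjoint supports. Since every summand of $\mathcal{L}_{\text{sbr}}$ is non-negative, the total is zero iff this disjointness holds for every pair $i \neq j$, i.e., iff $\{b_i\}$ is a disjoint basis in the sense of the theorem.

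Finally, the hypothesis that $V$ admits at least one disjoint basis upgrades the lower bound $0$ to an attained minimum, so $0$ is indeed the minimal value of $\mathcal{L}_{\text{sbr}}$ over bases of $V$. Combining the two preceding observations, a basis minimizes $\mathcal{L}_{\text{sbr}}$ iff each pairwise summand vanishes iff the basis is disjoint, which proves both directions of the equivalence. There is essentially no technical obstacle here; the only point requiring care is ensuring that the denominators do not vanish (which follows from the definition of a basis), and the argument reduces to the elementary fact that the cosine similarity between two vectors with non-negative entries is zero precisely when their supports are disjoint.
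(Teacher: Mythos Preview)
Your proposal is correct and follows essentially the same approach as the paper: both arguments show that each pairwise term in $\mathcal{L}_{\text{sbr}}$ is non-negative and vanishes exactly when the supports of the two vectors are disjoint, then invoke the existence of a disjoint basis to conclude that the minimum value is zero and is attained precisely on disjoint bases. Your version is slightly more explicit about the well-definedness of the denominators, but otherwise the logic is identical.
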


\begin{proof}
Suppose $\{b_i\}$ is a basis of $V$. If $b$ is disjoint then for all $i \ne j$ we have $|b_i| \cdot |b_j| = 0$ since for each index $\ell$, at least one of $b_{i, \ell}$ or $b_{j, \ell}$ will be zero. Conversely, if $b$ is not disjoint, then there exist some $i \ne j$ such that $|b_i| \cdot |b_j| > 0$. To see this, note that we must have some $i \ne j$ where $b_i$ and $b_j$ share a non-zero element at index $\ell$. $|b_i| \cdot |b_j|$, the sum of non-negative numbers, is then greater than or equal to $|b_{i, \ell}||b_{j, \ell}| > 0$.

In particular, this implies that for a disjoint basis $b$, we have $\mathcal{L}_{\text{sbr}}(b) = 0$, but otherwise $\mathcal{L}_{\text{sbr}}(b) > 0$. By assumption, there exists at least one disjoint basis so then the minimum of $\mathcal{L}_{\text{sbr}}$ over all possible bases of $V$ is $0$. This minimum is attained exactly when the input basis is disjoint. 
\end{proof}

\section{Additional Implementation Details}
\label{sec:addtional_impl_details}

A common degenerate solution in discovering the Lie algebra basis is when all basis vectors tend towards $0$, corresponding to the identity transformation. To prevent this, we add the following growth regularization, where $\iota$ and $\beta$ are hyperparameters.
\begin{equation*}
    \mathcal{L}_{\text{gr}} = -\iota \sum_{i = 1}^{k} \min(\| B_i \|, \beta)
\end{equation*}

The min term ensures that the model does not produce arbitrarily large generators. In the experiment details, we refer to $\iota$ as the growth factor and $\beta$ as the growth limit. 

An important hyperparameter in the discovery of infinitesimal generators is $k$, the dimension of the basis. \citet{Forestano_2023} suggest setting the dimension as the highest number that results in a vanishing loss. However, we find that the threshold for what constitutes ``vanishing'' can become ambiguous in real-world datasets. Therefore, to determine the final value of $k$, we first run the model repeatedly, varying the basis dimension in different runs. We initially set $k$ as the minimum value such that a model with $k$ generators always converges to the same algebra, irrespective of the starting conditions. Then, we increment $k$ one by one until the norm of the weakest generator drops below a threshold.
\section{Experiment Details}
\label{sec:experiment_details}
In this section, we include some additional details for the performed experiments, including the hyperparameters and synthetic dataset configurations. 

\subsection{Global Symmetry Discovery}
The predictor for this task is a $3$-layer MLP that takes the input of $30$-leading constituents for each sample, constructed of 4-momenta ($E/c$, $p_x$, $p_y$, $p_z$). This results in an input dimension of $120$. The predictor is trained for $10$ epochs with a learning rate of $0.001$ prior to the discovery process. We use cross-entropy loss for training. We find that the predictor can be relatively naive and still be suitable for symmetry discovery. 

In the infinitesimal generator discovery, we seed the basis with $7$ elements by our criteria for choosing the dimension of the basis. Although the Lorentz group is $6$ dimensional, our model occasionally finds an additional scaling generator. Interestingly, \citet{yang2023generative} find a similar generator using their methodology. We run the model for $10$ epochs using cross-entropy loss. We set the coefficient of standard basis regularization to be $0.1$ and the growth factor of the generators to be $1$. We do note set a growth limit. The learning rate is $0.001$.

For coset discovery, we seed the model with $K = 64$ basis elements and run $3$ epochs with cross-entropy loss. To filter out the final representatives, we find all the unique cosets in the top $q=16$ matrices. The learning rate is $0.001$.

In the downstream task, we replace all Minkowski norms and Minkowski inner products of LorentzNet \citep{gong2022efficient} with those appropriate to our discovered metric. We construct the model with $6$ group equivariant blocks with $72$ hidden dimensions and train it with a batch size of $32$ for $35$ epochs with dropout rate of $0.2$, weight decay rate of $0.01$, and learning rate of $0.0003$. Note that while our predictor used for symmetry discovery was limited to the $30$ leading components, the downstream model does not face the same restriction. We run our model and the baselines 3 times and record the average and standard deviation in Table \ref{toptagging-results}.

\subsection{Partial Differential Equation}
For this experiment, we create a dataset of $10000$ samples, each of size $128$x$128$. The exclusion region spans from $(0.1, 0.2)$ to $(0.3, 0.5)$, where $(0, 0)$ is the top left, and $(1, 1)$ is the bottom right. The initial condition is given by creating a purely vertical sinusoid with random parameters and adding it to a purely horizontal sinusoid with random parameters. To construct the output for each input, we approximate the heat equation using a finite difference method. We use $\alpha = 1$. In particular, we numerically integrate $50$ times with $dt = 0.01$. We use the Dirichlet boundary condition, where all boundary values (including those on the excluded region) take the value $\sqrt{2}$.

The charts in the first atlas have an in-radius of $14$ pixels (full dimension $29$x$29$) and an out-radius of $10$ (full dimension $21$x$21$). There are a total of $19$ charts centered at the following locations specified in the previously defined coordinate space:
\texttt{(0.5, 0.15), (0.675, 0.15),  (0.85, 0.15), (0.5, 0.325), (0.675, 0.325), (0.85, 0.325), (0.5, 0.5),  (0.675, 0.5),  (0.85, 0.5), (0.15, 0.675), (0.325, 0.675), (0.5, 0.675), (0.675, 0.675), (0.85, 0.675), (0.15, 0.85),  (0.325, 0.85),  (0.5, 0.85),  (0.675, 0.85),  (0.85, 0.85)}. The $\varphi_c$ do not perform any distortion, but do recenter each chart.

The charts in the second atlas have in-radius $26$ (full dimension $53$x$53$) and out-radius $20$ (full dimension $41$x$41$). They are centered at the following locations: \texttt{(0.65, 0.3), (0.675, 0.625), (0.35, 0.75)}. The $\varphi_c$ act the same way as in the first chart.

The predictors are simple $4$-layer CNNs. They are trained for $10$ epochs in tandem with the discovery process. In the discovery process we seed the model with a single infinitesimal generator. The growth factor is set to $0.1$ and growth limit is $1$. We use mean absolute error as the loss. We seed the model with $K=16$ cosets and took the top $q = 8$ matrices before filtering duplicates. We run our model for $10$ epochs. The learning rate is $0.001$.

\subsection{MNIST on Sphere}
The dataset is constructed by creating $10000$ spheres. Each sphere has $3$ randomly selected digits from the MNIST dataset projected onto its equator at fixed positions. In particular, we first rotate each of the three digits $\pm 60$ degrees. Then, all three digits of size $28$x$28$ are placed onto a cylinder of dimensions $120$x$60$ at equal intervals. Finally, they are projected onto a sphere using an equirectangular projection. To compute the output sphere, we label all pixels that are fully black as background. The pixels that have non-zero color are labeled with their numeric value. Consequently, there are a total of $11$ classes.

The chosen atlas uses $3$ charts located at the locations of each of the three digits. In particular, the in- and out-radius of each chart is $14$ (full dimension $29$x$29$). The predictors for each chart are CNNs that are identical in architecture but independently trained. When training the predictors, we use cross-entropy loss and weigh background pixels $10$ times less than numeric pixels. The growth factor of the generator is set to $0.35$ and growth limit is $1$. The predictors are trained in tandem with the discovery process. In particular, we run the discovery process for $20$ epochs with a learning rate of $0.001$. As a baseline, we compare to a modified LieGAN that can discover subgroups of $\mathrm{SO}(3)$. LieGAN is given a single continuous generator as well. LieGAN is run for $20$ epochs with a learning rate of $0.0002$ for the discriminator and $0.001$ for the generator.

In the downstream task, we train two CNNs that are identical in design, except that one has $\mathrm{Z}_4$ steerable kernels. The model that has $\mathrm{Z}_4$ steerable kernels has less than a third of the parameters of the unmodified CNN. Both models are trained for $100$ epochs on the dataset. During training, to compensate for the abundance of background pixels, we weigh the background pixels $0.005$ times as much as the numeric counterparts. In reporting accuracy, we fully ignore the background pixels and focus only on the numeric pixels.

\subsection{ClimateNet}

We use ClimateNet dataset, which is an expert labeled open dataset provided by \citet{gmd-14-107-2021}. There are roughly $200$ input images in the training set, with some images having multiple human expert labelers. 

In the symmetry discovery process, we use an atlas of $4$ partially overlapping charts that are scattered across the equator. The in-radius is set to $200$ (full dimension $401$x$401$) whereas the out-radius is $150$ (full dimension $301$x$301$). The individual $\varphi_c$ do not do any additional projection, i.e. they keep the projection that the dataset used to parameterize the sphere as a rectangle. We use a modified CGNet \citep{Wu2018CGNetAL, gmd-14-107-2021} as the predictor for each chart. In particular, it is given four atmospheric variables as input: TMQ, U850, V850, PSL. The predictors are trained in tandem with the discovery process. The model is seeded with $4$ generators, and we use a batch size of $16$ and run for $30$ epochs using cross-entropy loss. The coefficient of the standard basis regularization is set to $0.05$, the growth factor is $5.0$, and the growth limit is $1.0$. The learning rate is $0.001$.

The downstream models are implemented using a U-net \citep{10.1007/978-3-319-24574-4_28} version of icoCNN \citep{Diaz_Guerra_2023}. We also add strided convolutions and replace layer norm with batch norm. Since batch norm is typically not equivariant, we perform average pooling beforehand when necessary. We set the resolution of the icosahedron to $r=6$. We train each model for $20$ epochs with batch size of $4$ with a learning rate of $0.001$. Note that in the downstream models, we give them all $16$ atmospheric variables.

\begin{table*}[h]
\caption{ClimateNet dataset accuracy full results. }
\label{tab:climate_results2}
\vskip 0.1in
\begin{center}
\begin{small}
\begin{sc}
\begin{tabular}{l|ccccccc}
\toprule
Model & Param $\downarrow$ & BG $\uparrow$ & TC $\uparrow$ & AR $\uparrow$ & Mean $\uparrow$ & Precision $\uparrow$ & Recall $\uparrow$ \\
\midrule
$\mathrm{SO}(2)$ model  & 766k  & 0.9107 & 0.1744 & 0.3839 & 0.4896 & 0.5983 & 0.6274 \\
$\mathrm{GL}^{+}(2)$ model & 111k & 0.9086 & 0.1720 & 0.3790 & 0.4865 & 0.5846 & 0.6344 \\
Human      & - & 0.9137 & 0.2475 & 0.3467 & 0.5026 & - & - \\
\bottomrule
\end{tabular}
\end{sc}
\end{small}
\end{center}
\vskip -0.1in
\end{table*}

We elaborate on the results of table \ref{tab:climate_results2}. For the first two rows, the mean IoU, precision, and recall are calculated between the model predictions and every human expert label that exists for that image and then averaged. Then, these results themselves are averaged across all input images in the test dataset \citet{gmd-14-107-2021}. We run each model $10$ times and include the run with the highest mean IoU. In the third row, we compute the mean IoU between human labels for the same input image in the training set. All scores are computed after projection onto an icosahedron. 
\section{Additional Experiments}
\label{sec:additional_experiments}

\subsection{Coset Discovery on Imperfect Data}
We want to see if \texttt{AtlasD} can discover multiple cosets in situations where the loss of a one coset (typically the identity) is lower than the loss of another coset in the symmetry group. This could happen when the trained predictor is not super accurate or if the symmetry of the true function itself is not perfect. To do so, we experiment on the global discrete symmetries of the function $f(x, y) = \arctan\left( \frac{y + 0.1}{x} \right)$. The identity transformation is a perfect symmetry of $f$ and rotation by $180$ degrees is an approximate symmetry. Although there is some variability between runs, Figure \ref{fig:atlas-arctan} highlights we are able to consistently discover rotation coset representatives among the top $24$ cosets.

\begin{figure}[h]
    \centering
    \includegraphics[width=0.3\linewidth]{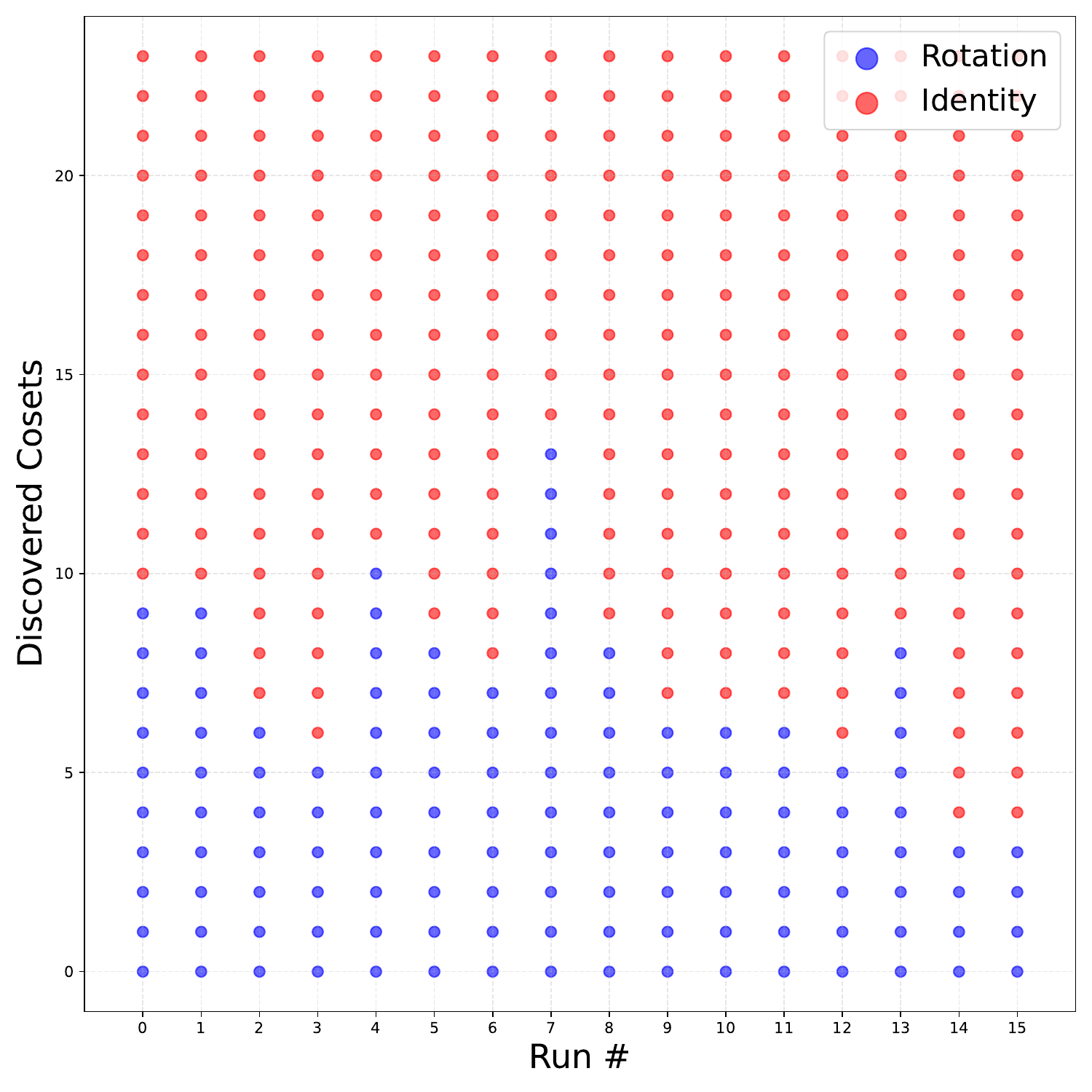}
    \caption{Discovered cosets of $\arctan$ symmetry group. In each column, we plot the distribution of the top $24$ cosets for a given run. Red denotes the identity component and blue is the rotation component. }
    \label{fig:atlas-arctan}
\end{figure}

\subsection{Top-Tagging Experiment without Standard Basis Regularization}
To test how effective $\mathcal{L}_{\text{sbr}}$ is at regularizing a basis towards standard form, we retry the the infinitesimal generator discovery of the top-tagging experiment. This time, we replace $\mathcal{L}_{\text{sbr}}$ with cosine similarity and plot the resultant basis in Figure \ref{fig:toptagging-basis-no-sbr}. 

\begin{figure}[h]
    \centering
    \includegraphics[width=0.8\linewidth]{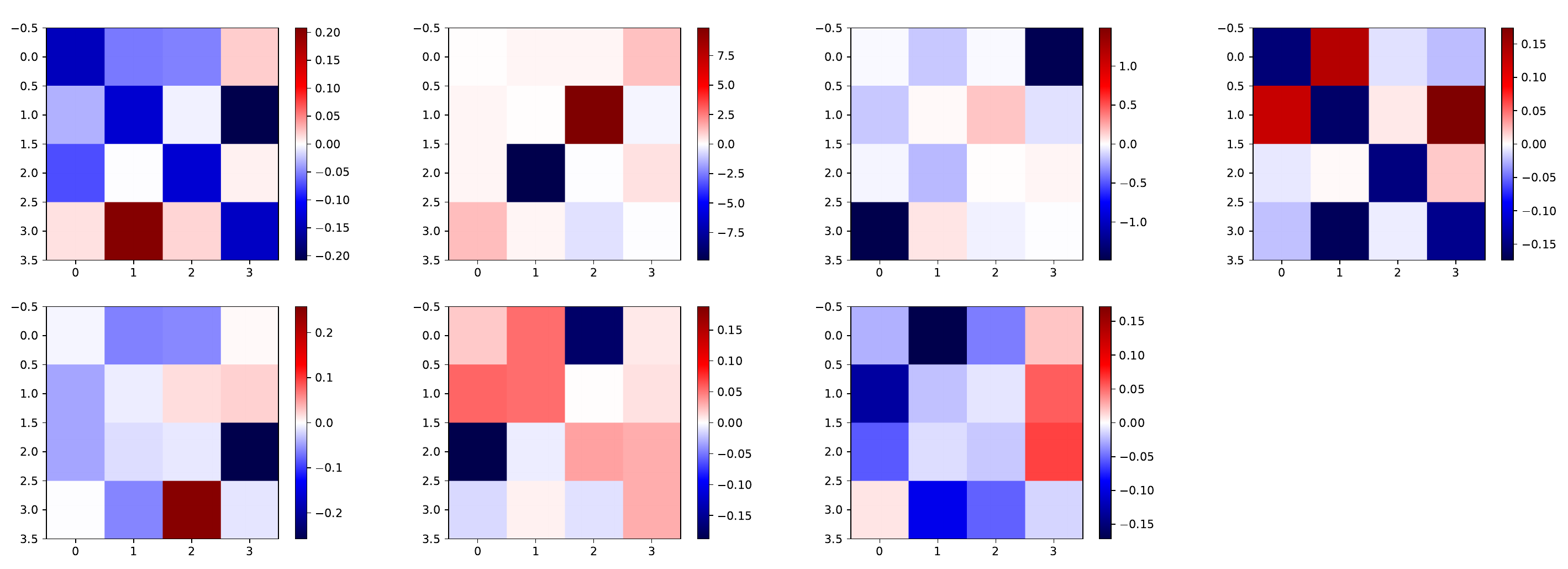}
    \caption{Top-tagging basis using cosine similarity. }
    \label{fig:toptagging-basis-no-sbr}
\end{figure}

In this basis, all $21$ pairs of generators share at least one non-zero term. On the other hand, the one produced by $\mathcal{L}_{\text{sbr}}$ (Figure $\ref{fig:toptagging_basis}$) has a single such pair.
We conclude that $\mathcal{L}_{\text{sbr}}$ is effective at forming a standard basis.

\subsection{Top-Tagging and PDE Experiment without Coset Normalization}
We want to see the usefulness of the normalization step during coset discovery. To do so, we repeat the discrete discovery step of both the top-tagging and PDE experiments without normalization. 

In the top-tagging experiments, we discover two cosets (Figure \ref{fig:toptagging-cosets-no-norm}). There is more noise and a slight scaling factor as compared to the result with normalization (Figure \ref{fig:toptagging_basis}), but we are still able to discover the parity and identity components. In the PDE experiment, we are usually able to discover the identity and reflection components (Figure \ref{fig:pde-cosets-no-norm}), but the filtration process fails and extraneous cosets are also included.

\begin{figure}[h]
    \centering
    \includegraphics[width=0.4\linewidth]{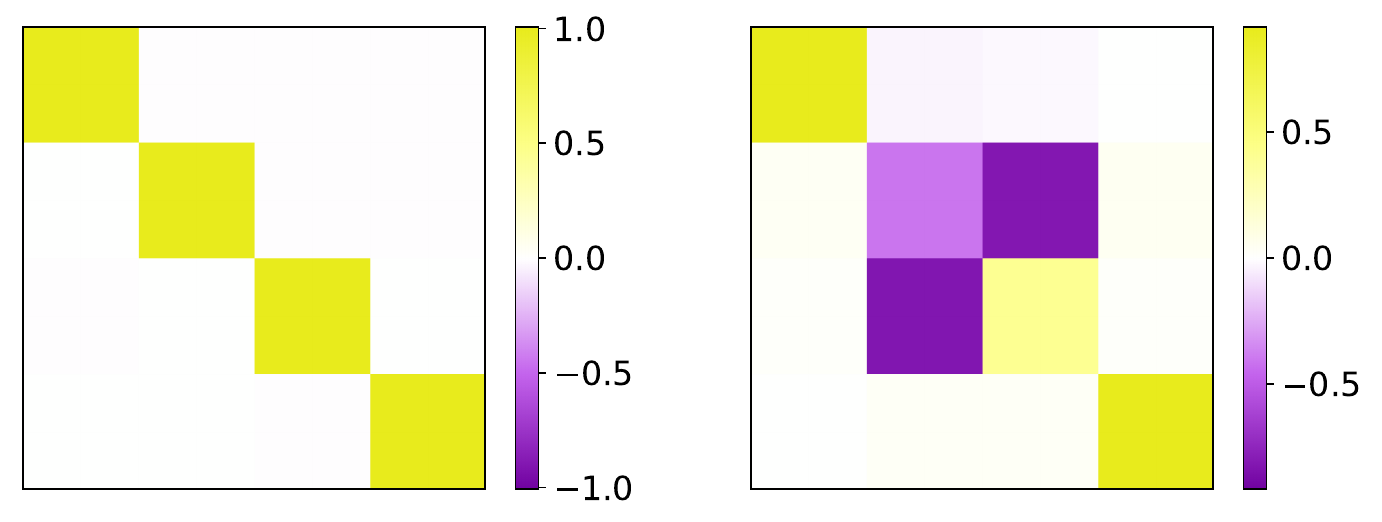}
    \caption{Top-tagging cosets without normalization. }
    \label{fig:toptagging-cosets-no-norm}
\end{figure}

\begin{figure}[h]
    \centering
    \includegraphics[width=0.7\linewidth]{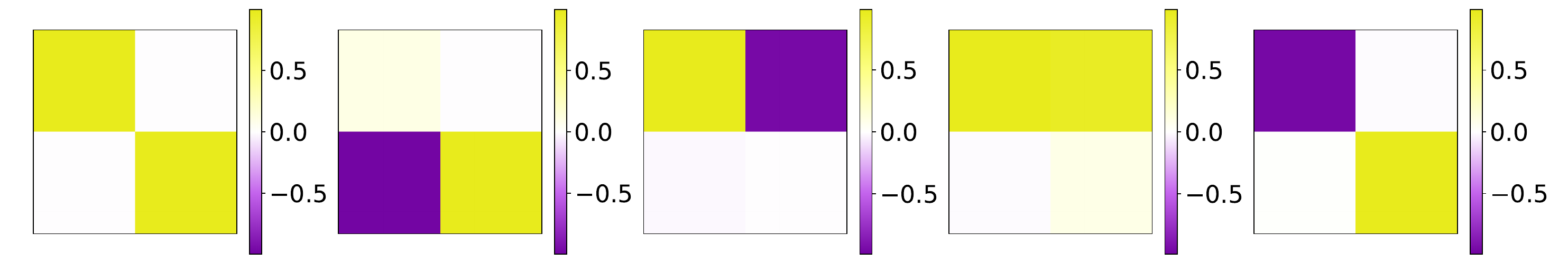}
    \caption{PDE cosets without normalization.}
    \label{fig:pde-cosets-no-norm}
\end{figure}

In both cases, we manage to find the ground truth cosets but there is more noise overall. We conclude that normalization is a helpful step of the discrete discovery process.

\subsection{Comparison with LieGG in PDE Experiment}
\label{sec:liegg_cmp}
To further highlight the necessity of considering local symmetry, we compare our results from the PDE experiment to those of LieGG \citep{moskalev2022liegg}. We generalize LieGG to be able to discover global equivariances when the group acts on $\mathbb{R}^2$. 
Specifically, we treat our dataset as modelling a collection of input and output feature fields $(F_{\ell}, G_{\ell})$ where $F_{\ell}, G_{\ell} : \mathbb{R}^2 \to \mathbb{R}$. This is given to us in discretized form so that we only know $F_{\ell}$ and $G_{\ell}$ along some sampling $(x^{1}_{1}, x^{2}_{1}), (x^{1}_{2}, x^{2}_{2}) \ldots (x^{1}_{n}, x^{2}_{n})$ of $\mathbb{R}^2$. To construct the polarization matrix, for every single output location $x_i$ in every single datapoint $(F_{\ell}, G_{\ell})$, we add a row given by the following linear equation \eqref{eq:liegg-equivariance}. The equation is modified from the original equation (3) from \citet{moskalev2022liegg} to learn \textit{equivariance} instead of \textit{invariance}. We keep the notations consistent with those in \citet{moskalev2022liegg}. We note that this procedure is rather space-intensive as the number of rows is proportional to the number of total output pixels. 

\begin{equation}
    \sum_{j,k \in \{1,2\}}     
    \mathfrak{h}_{k, j}
    \left(
         x_{i}^{j} \frac{\partial G_{\ell}(x_{i})}{\partial x^{k}_{i}} 
         - 
         \sum_{p = 1}^{n}
         x_{p}^{j} \frac{\partial F_{\ell}(x_{p})}{\partial x^{k}_{p}} \frac{\partial G_{\ell}(x_{i})}{\partial F_{\ell}(x_{p})}
    \right)
     = 0
     \label{eq:liegg-equivariance}
\end{equation}

For simplicity, we consider the setting where LieGG is given access to the ground truth partial differential equation instead of a predictor network. We also do not perform time stepping and focus solely on the global symmetries of $\frac{\partial u}{\partial t} = \alpha(\frac{\partial^2 u}{\partial x^2} + \frac{\partial^2 u}{\partial y^2})$. This setup, while slightly different from the experiments for our method, only makes it easier for LieGG to learn the symmetry. The singular values of the discovered generators were shown in Figure \ref{fig:liegg-pde}.

When we remove the heat source and there is a true global symmetry, the smallest singular value is $1.137 \cdot 10^{-6}$ and is associated with the $\mathrm{SO}(2)$ generator. When there is a heat source like the one in the PDE experiment, all singular values become much higher. We conclude that global symmetry discovery methods such as LieGG are unable to discover meaningful symmetries in systems that only have local symmetries.

\subsection{Coset Discovery of Complex Component Groups}
Next, we verify that the coset discovery algorithm of \texttt{AtlasD} can discover more complex component groups. In particular, we search for the symmetries of the function $f(x, y) = |x| + |y|$. The ground truth symmetry group is $D_4$, which contains $8$ elements.

We seed the discovery algorithm with $K = 256$ representatives and report the unique cosets among the top $q = 128$. In Figure \ref{fig:d4-comp-group}, we show that \texttt{AtlasD} finds exactly the $8$ elements of $D_4$.

\subsection{Discovery under Sheared Charts}
To further verify the resilience of \texttt{AtlasD} under the choice of atlas, we repeat the PDE experiment under a third atlas where one of the coordinate charts is partially sheared (Figure \ref{fig:pde-atlas2}).

We discover a single generator
$
\begin{bmatrix} 
-0.368 & -1.035 \\
 1.101 & 0.386
\end{bmatrix}
$ and both cosets. While the noisy chart does worsen the learned generator, it remains recognizable as a rotation.

\begin{minipage}{0.6\textwidth}
    \begin{figure}[H]
        \centering
        \includegraphics[width=\linewidth]{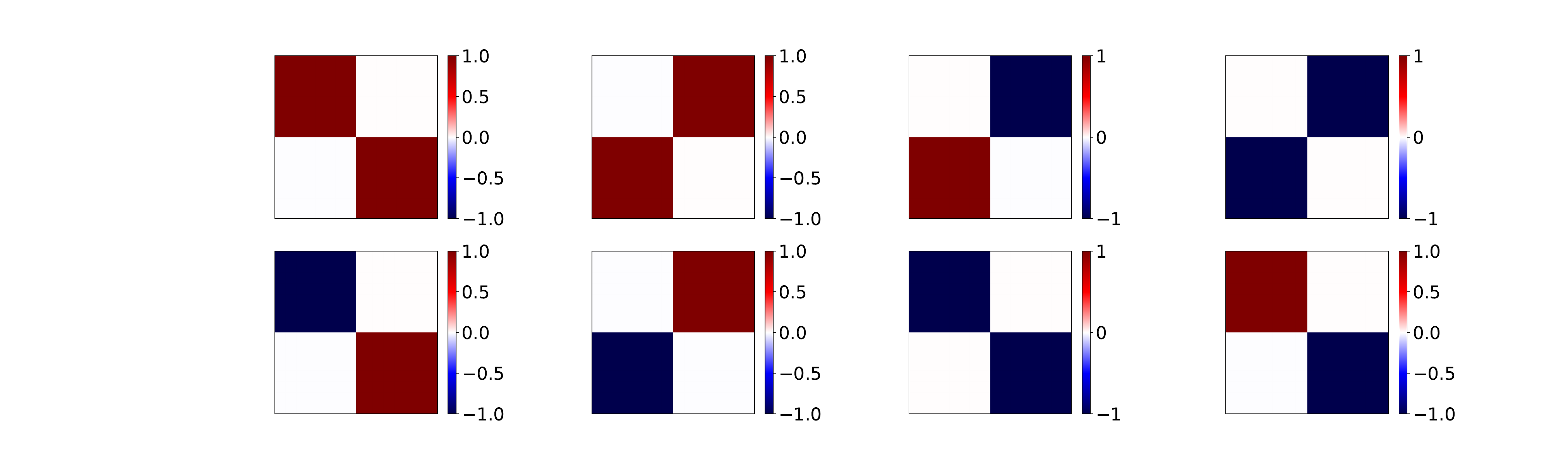}
        \caption{Discovered symmetry group for $|x| + |y|$}
        \label{fig:d4-comp-group}
    \end{figure}
\end{minipage}
\hfill
\begin{minipage}{0.3\textwidth}
    \begin{figure}[H]
        \centering
        \includegraphics[width=\linewidth]{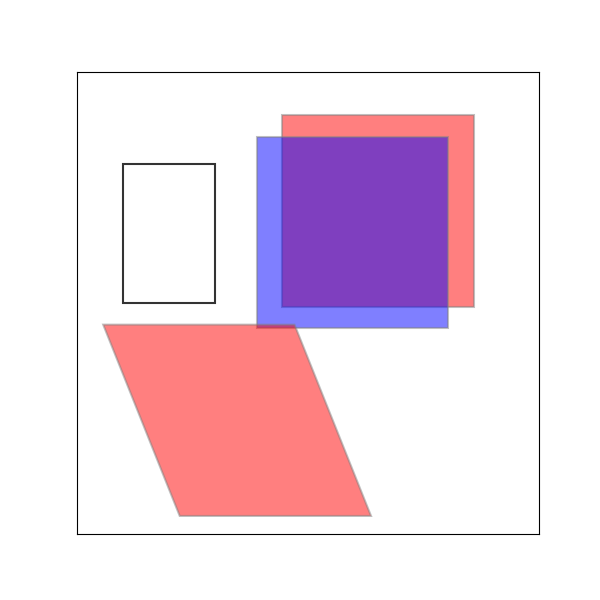}
        \caption{Atlas with sheared chart}
        \label{fig:pde-atlas2}
    \end{figure}
\end{minipage}

\section{Algorithm Analysis}
\label{sec:alg_analysis}

We analyze the space and time complexity of the different parts of our algorithm. In our analysis, we assume that $\dim \mathcal{M}$ is fixed as a constant. We use the notation that $T$ is the number of training iterations for a given run, $k$ is the dimension of the ground truth symmetry, $P$ is the number of localized predictors, $K$ is the total number of cosets used during training, and $q$ is the max number of cosets reported.

\subsection{Infinitesimal Generators}
Our algorithm requires storing the discovered Lie algebra basis, which takes space proportional to $k$. We also need to store each of the $P$ localized predictors, giving us space complexity of $\mathcal{O}(k + P)$.

We first consider the amount of time a single training step takes. Calculating the standard basis regularization takes $\mathcal{O}(k^2)$ time. For a given predictor, computing the main loss takes time proportional to $k$, which is needed for the sampling of the group element. In our implementation, we evaluate all $P$ predictors in a training step. There are $T$ total training steps in a given run. Then, we require at most $k$ total runs to determine the optimal dimension of the basis. This gives the total time complexity as $\mathcal{O}(kT(k^2 + kP)) = \mathcal{O}(k^2T(k + P))$

The above result is somewhat misleading as it hides the high constant factor that the predictor evaluation entails. If we ignore the regularizations and only consider the predictor evaluation, the time complexity becomes $\mathcal{O}(kTP)$.

\subsection{Discrete Symmetries}
We require storing the $K$ cosets as well as the $P$ predictors. The space complexity is then $\mathcal{O}(K + P)$.

In each training step, we evaluate each of the $P$ predictors on $K$ transformed inputs, corresponding to the $K$ cosets. This is repeated for all $T$ training iterations. After the training process, we must filter the duplicate cosets. In the worst case, we report $q$ cosets in which case we need to do $\mathcal{O}(Kq)$ comparisons. The total time is then $\mathcal{O}(K(TP + q))$

\end{document}